\def\eqref#1{equation~\ref{#1}}
\def\1{\bm{1}}
\DeclareMathAlphabet{\mathsfit}{\encodingdefault}{\sfdefault}{m}{sl}
\SetMathAlphabet{\mathsfit}{bold}{\encodingdefault}{\sfdefault}{bx}{n}
\newcommand{\E}{\mathbb{E}}
\newcommand{\R}{\mathbb{R}}
\newcommand{\cmark}{\ding{51}}%
\newcommand{\xmark}{\ding{55}}%
\newcommand{\dittotikz}{%
    \tikz{
        \draw [line width=0.12ex] (-0.2ex,0) -- +(0,0.8ex)
            (0.2ex,0) -- +(0,0.8ex);
        \draw [line width=0.08ex] (-0.6ex,0.4ex) -- +(-1.5em,0)
            (0.6ex,0.4ex) -- +(1.5em,0);
    }%
}
\renewcommand{\d}{{\bf{d}}}
\newcommand{\J}{{\bf{J}}}
\newcommand{\A}{\bf{A}}
\newcommand{\x}{{\bf{x}}}
\newcommand{\z}{{\bf{z}}}
\newcommand{\tx}{{\bf{\tilde x}}}
\renewcommand{\d}{{\rm{d}}}
\newcommand{\zz}{\mathbf{z}}
\newcommand{\xx}{\mathbf{x}}
\newcommand{\txx}{{\tilde{\mathbf{x}}}}
\newcommand{\tzz}{{\tilde{\mathbf{z}}}}
\newcommand{\Sp}{\mathbb{S}}
\newcommand{\X}{\mathcal{X}}
\newcommand{\Z}{\mathcal{Z}}
\newcommand*{\lunif}{\mathcal{L}_\mathsf{uni}}
\newcommand*{\lalign}{\mathcal{L}_\mathsf{align}}
\newcommand*{\lcontr}{\mathcal{L}_\mathsf{contr}}
\newcommand*{\ldeltaunif}{\mathcal{L}_{\delta\text{-}\mathsf{uni}}}
\newcommand*{\ldeltaalign}{\mathcal{L}_{\delta\text{-}\mathsf{align}}}
\newcommand*{\ldeltacontr}{\mathcal{L}_{\delta\text{-}\mathsf{contr}}}
\newcommand*{\lce}{\mathcal{L}_{\mathsf{CE}}}
\newcommand{\T}[0]{^{\mathsf{T}}}
\newcommand*{\iid}{\ifmmode \stext{i.i.d.} \else i.i.d.\@\xspace \fi}
\renewcommand{\eqref}[1]{(\ref{#1})}
\newcommand{\qh}{q_{\rm{h}}}
\newcommand{\expectunder}[2]{\underset{{#1}}{\mathbb{E}}\left[#2\right]}
\providecommand{\customgenericname}{}
\newcommand{\newcustomtheorem}[2]{%
  \newenvironment{#1}[1]
  {%
   \renewcommand\customgenericname{#2}%
   \renewcommand\theinnercustomgeneric{##1}%
   \innercustomgeneric
  }
  {\endinnercustomgeneric}
}
\newtheorem{theorem}{Theorem}
\newtheorem{corollary}{Corollary} 
\newtheorem{proposition}{Proposition}
\newtheorem{lemma}{Lemma}
\theoremstyle{definition} \newtheorem{definition}{Definition}
\icmltitlerunning{Contrastive Learning Inverts the Data Generating Process}
\begin{document}

\twocolumn[
\icmltitle{Contrastive Learning Inverts the Data Generating Process}

\icmlsetsymbol{equal}{*}

\begin{icmlauthorlist}
\icmlauthor{Roland S. Zimmermann}{equal,tue,imprs}
\icmlauthor{Yash Sharma}{equal,tue,imprs}
\icmlauthor{Steffen Schneider}{equal,tue,imprs,epfl}
\icmlauthor{Matthias Bethge \textsuperscript{\textdagger}}{tue}
\icmlauthor{Wieland Brendel \textsuperscript{\textdagger}}{tue}
\end{icmlauthorlist}

\icmlaffiliation{tue}{University of T\"ubingen, T\"ubingen, Germany}
\icmlaffiliation{imprs}{IMPRS for Intelligent Systems, T\"ubingen, Germany}
\icmlaffiliation{epfl}{EPFL, Geneva, Switzerland}

\icmlcorrespondingauthor{Roland S. Zimmermann}{roland.zimmermann@uni-tuebingen.de}

\icmlkeywords{Machine Learning, ICML}

\vskip 0.3in
]

\printAffiliationsAndNotice{\icmlEqualContribution} %

\begin{abstract}
    Contrastive learning has recently seen tremendous success in self-supervised learning. So far, however, it is largely unclear why the learned representations generalize so effectively to a large variety of downstream tasks. We here prove that feedforward models trained with objectives belonging to the commonly used InfoNCE family learn to implicitly invert the underlying generative model of the observed data. While the proofs make certain statistical assumptions about the generative model, we observe empirically that our findings hold even if these assumptions are severely violated.
    Our theory highlights a fundamental connection between contrastive learning, generative modeling, and nonlinear independent component analysis, thereby furthering our understanding of the learned representations as well as providing a theoretical foundation to derive more effective contrastive losses.\footnote{Online version and code: \href{https://brendel-group.github.io/cl-ica/}{brendel-group.github.io/cl-ica/}} %
\end{abstract}

\section{Introduction}
    With the availability of large collections of unlabeled data, recent work has led to significant advances in self-supervised learning. In particular, contrastive methods have been tremendously successful in learning representations for visual and sequential data \citep{logeswaran2018efficient,wu2018unsupervised,oord2018representation,henaff2020data,tian2019contrastive,hjelm2018learning,bachman2019learning,he2019momentum,chen2020simple,schneider2019wav2vec,Baevski2020vqwav2vec,baevski2020wav2vec,ravanelli2020multi}.     %
    While a number of explanations have been provided as to why contrastive learning leads to such informative representations, existing theoretical predictions and empirical observations appear to be at odds with each other~\citep{tian2019contrastive,bachman2019learning,wu2020importance,saunshi2019theoretical}. 
    
    In a nutshell, contrastive methods aim to learn representations where related samples are aligned (positive pairs, e.g. augmentations of the same image), while unrelated samples are separated (negative pairs)~\citep{chen2020simple}.
    Intuitively, this leads to invariance to irrelevant details or transformations (by decreasing the distance between positive pairs), while preserving a sufficient amount of information about the input for solving downstream tasks (by increasing the distance between negative pairs)~\citep{tian2020makes}.
    This intuition has recently been made more precise by \cite{wang2020understanding}, showing that a commonly used contrastive loss from the InfoNCE family~\citep{Gutmann12JMLR, oord2018representation, chen2020simple} asymptotically converges to a sum of two losses: an \emph{alignment} loss that pulls together the representations of positive pairs, and a \emph{uniformity} loss that maximizes the entropy of the learned latent distribution.
    
    \begin{figure*}
        \centering
        \includegraphics[width=0.8\textwidth]{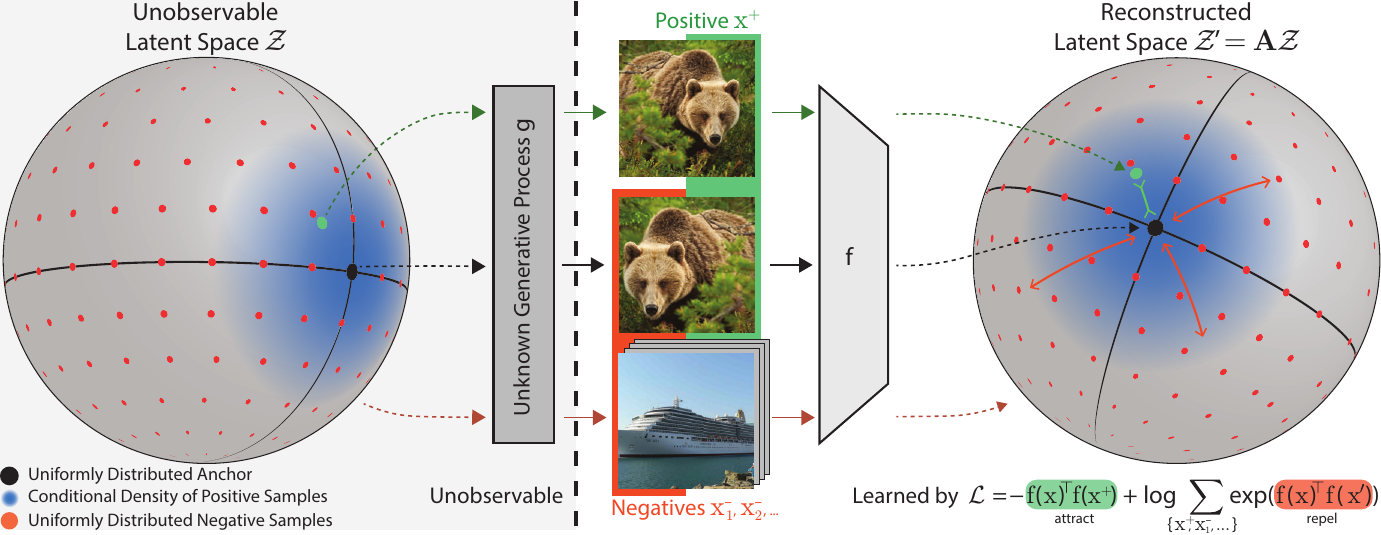}
        \caption{We analyze the setup of contrastive learning, in which a feature encoder $f$ is trained with the InfoNCE objective \citep{Gutmann12JMLR, oord2018representation, chen2020simple} using positive samples (green) and negative samples (orange). We assume the observations are generated by an (unknown) injective generative model $g$ that maps unobservable latent variables from a hypersphere to observations in another manifold. Under these assumptions, the feature encoder $f$ implictly learns to invert the ground-truth generative process $g$ up to linear transformations, i.e., $f = \mathbf{A} g^{-1}$ with an orthogonal matrix $\mathbf{A}$, if $f$ minimizes the InfoNCE objective.}
        \label{fig:header_figure}
    \end{figure*}
    
    We show that an encoder learned with a contrastive loss from the InfoNCE family can recover the true generative factors of variation (up to rotations) if the process that generated the data fulfills a few weak statistical assumptions. This theory bridges the gap between contrastive learning, nonlinear independent component analysis (ICA) and generative modeling (see Fig.~\ref{fig:header_figure}).
    Our theory reveals implicit assumptions encoded in the InfoNCE objective about the generative process underlying the data. If these assumptions are violated, we show a principled way of deriving alternative contrastive objectives based on assumptions regarding the positive pair distribution.
    We verify our theoretical findings with controlled experiments, providing evidence that our theory holds true in practice, even if the assumptions on the ground-truth generative model are partially violated. %
    
    To the best of our knowledge, our work is the first to analyze under what circumstances representation learning methods used in practice provably represent the data in terms of its underlying factors of variation. Our theoretical and empirical results suggest that the success of contrastive learning in many practical applications is due to an implicit and approximate inversion of the data generating process, which explains why the learned representations are useful in a wide range of downstream tasks.
    
    In summary, our contributions are:
    \begin{itemize}
        \item We establish a theoretical connection between the InfoNCE family of objectives, which is commonly used in self-supervised learning, and nonlinear ICA. We show that training with InfoNCE inverts the data generating process if certain statistical assumptions on the data generating process hold.
        \item We empirically verify our predictions when the assumed theoretical conditions are fulfilled. In addition, we show successful inversion of the data generating process even if these theoretical assumptions are partially violated. %
        \item We build on top of the CLEVR rendering pipeline~\citep{johnson2017clevr} to generate a more visually complex disentanglement benchmark, called \emph{3DIdent}, that contains hallmarks of natural environments (shadows, different lighting conditions, a 3D object, etc.). We demonstrate that a contrastive loss derived from our theoretical framework can identify the ground-truth factors of such complex, high-resolution images.
    \end{itemize}
    
\section{Related Work}
\paragraph{Contrastive Learning}
    Despite the success of contrastive learning (CL), our understanding of the learned representations remains limited, as existing theoretical explanations yield partially contradictory predictions. One way to theoretically motivate CL is to refer to the InfoMax principle \citep{linsker1988self}, which corresponds to maximizing the mutual information (MI) between different views \citep{oord2018representation, bachman2019learning, hjelm2018learning, chen2020simple, tian2020makes}. However, as optimizing a tighter bound on the MI can produce worse representations \citep{tschannen2019mutual}, it is not clear how accurate this motivation describes the behavior of CL.
    
    Another approach aims to explain the success by introducing latent classes \citep{saunshi2019theoretical}. While this theory has some appeal, there exists a gap between empirical observations and its predictions, e.g. the prediction that an excessive number of negative samples decreases performance does not corroborate with empirical results~\citep{wu2018unsupervised,tian2019contrastive,he2019momentum,chen2020simple}. However, recent work has suggested some empirical evidence for said theoretical prediction, namely, issues with the commonly used sampling strategy for negative samples, and have proposed ways to mitigate said issues as well~\citep{robinson2020contrastive, chuang2020debiased}.

    More recently, the behavior of CL has been analyzed from the perspective of \emph{alignment} and \emph{uniformity} properties of representations, demonstrating that these two properties are correlated with downstream performance~\citep{wang2020understanding}.
    We build on these results to make a connection to cross-entropy minimization from which we can derive identifiability results.%

\paragraph{Nonlinear ICA}
    Independent Components Analysis (ICA) attempts to find the underlying sources for multidimensional data. In the nonlinear case, said sources correspond to a well-defined nonlinear generative model $g$, which is assumed to be invertible (i.e., injective)~\citep{Hyvabook,Jutten10}. In other words, nonlinear ICA solves a demixing problem:
    Given observed data $\mathbf{x} = g(\mathbf{z})$, it aims to find a model $f$ that equals the inverse generative model $g^{-1}$, which allows for the original sources $\mathbf{z}$ to be recovered.
    
    \citet{hyvarinen2018nonlinear} show that the nonlinear demixing problem can be solved as long as the independent components are conditionally mutually independent with respect to some auxiliary variable. The authors further provide practical estimation methods for solving the nonlinear ICA problem~\citep{hyvarinen2016unsupervised,hyvarinen2017nonlinear}, similar in spirit to noise contrastive estimation (NCE; \citealp{Gutmann12JMLR}). Recent work has generalized this contribution to VAEs~\citep{khemakhem2020variational,locatello2020weakly,klindt2020slowvae}, as well as (invertible-by-construction) energy-based models~\citep{khemakhem2020ice}. We here extend this line of work to more general feed-forward networks trained using InfoNCE~\citep{oord2018representation}.
    
    In a similar vein, \citet{roeder2020linear} build on the work of \citet{hyvarinen2018nonlinear} to show that for a model family which includes InfoNCE, distribution matching implies parameter matching. In contrast, we associate the learned latent representation with the ground-truth generative factors, showing under what conditions the data generating process is inverted, and thus, the true latent factors are recovered.
    
\section{Theory}
    
    We will show a connection between contrastive learning and identifiability in the form of nonlinear ICA. For this, we introduce a feature encoder $f$ that maps observations $\xx$ to representations.
    We consider the widely used \emph{InfoNCE} loss, which often assumes $L^2$ normalized representations \citep{wu2018unsupervised, he2020momentum, tian2019contrastive, bachman2019learning,chen2020simple},
    \begin{align} \label{eq:contrastive_loss}
        &\lcontr(f; \tau, M) \quad := \\ 
        &\underset{\substack{
            (\x, \tx) \sim p_\mathsf{pos} \\
            \{\xx^-_i\}_{i=1}^M \overset{\text{i.i.d.}}{\sim} p_\mathsf{data}
        }}{\mathbb{E}} \left[\, {- \log \frac{e^{f(\xx)^{\mathsf{T}} f(\tx) / \tau }}{e^{f(\xx)^{\mathsf{T}} f(\tx) / \tau } + \sum\limits_{i=1}^M e^{f(\xx)^{\mathsf{T}} f(\xx^-_i) / \tau }}}\,\right]. \nonumber
    \end{align}
    Here $M\in\mathbb{Z}_+$ is a fixed number of negative samples, $p_{\rm{data}}$ is the distribution of all observations and $p_{\rm{pos}}$ is the distribution of positive pairs.
    This loss was motivated by the InfoMax principle \citep{linsker1988self}, and has been shown
    to be effective by many recent representation learning methods \citep{logeswaran2018efficient,wu2018unsupervised,tian2019contrastive,he2019momentum,hjelm2018learning,bachman2019learning,chen2020simple,baevski2020wav2vec}. Our theoretical results also hold for a loss function whose denominator only consists of the second summand across the negative samples (e.g., the SimCLR loss \citep{chen2020simple}). %
    
    In the spirit of existing literature on nonlinear ICA \cite{hyvarinen1999nonlinear, harmeling2003kernel,sprekeler2014extension,hyvarinen2016unsupervised,hyvarinen2017nonlinear, Gutmann12JMLR, hyvarinen2018nonlinear, khemakhem2020variational}, we assume that the observations $\xx \in \X$ are generated by an invertible (i.e., injective) generative process $g: \Z \to \X$, where $\X \subseteq \R^K$ is the space of observations and $\Z \subseteq \R^N$ with $N\leq K$ denotes the space of latent factors. Influenced by the commonly used feature normalization in InfoNCE, we further assume that $\Z$ is the unit hypersphere $\Sp^{N-1}$ (see Appx.~\ref{apx:gt_assumptions}).
    Additionally, we assume that the ground-truth marginal distribution of the latents of the generative process is uniform and that the conditional distribution (under which positive pairs have high density) is a von Mises-Fisher (vMF) distribution:
    \begin{align} \label{eq:vmf_conditional}
        p(\z) &= |\Z|^{-1}, \quad\quad p(\z|\tzz) = C_p^{-1} e^{\kappa \z^\top \tzz} \quad \text{with} \\ C_p :&= \int e^{\kappa \z^\top \tzz} \,\d\tzz = \text{const.}, \quad \xx = g(\z), \quad \txx = g(\tzz). \nonumber
    \end{align}
    
    Given these assumptions, we will show that if $f$ minimizes the contrastive loss $\lcontr$, then $f$ solves the demixing problem, i.e., inverts $g$ up to orthogonal linear transformations. %
    
    Our theoretical approach consists of three steps:
    (1) We demonstrate that $\lcontr$ can be interpreted as the cross-entropy between the (conditional) ground-truth and inferred latent distribution. %
    (2) Next, we show that encoders minimizing $\lcontr$ maintain distance, i.e., two latent vectors with distance $\alpha$ in the ground-truth generative model are mapped to points with the same distance $\alpha$ in the inferred representation. %
    (3) Finally, we leverage distance preservation to show that minimizers of $\lcontr$  invert the generative process up to orthogonal transformations.
    Detailed proofs are given in Appx.~\ref{apx:proofs}.
    
    Additionally, we will present similar results for general convex bodies in $\mathbb{R^N}$ and more general similarity measures, see Sec.~\ref{sec:extension_rn}. For this, the detailed proofs are given in Appx.~\ref{apx:rn_extension}. %
    
\subsection{Contrastive learning is related to cross-entropy minimization}
    From the perspective of nonlinear ICA, we are interested in understanding how the representations $f(\x)$ which minimize the contrastive loss $\lcontr$ (defined in Eq.~\eqref{eq:contrastive_loss}) are related to the ground-truth source signals $\z$. To study this relationship, we focus on the map $h = f\circ g$ between the recovered source signals $h(\z)$ and the true source signals $\z$. Note that this is merely for mathematical convenience; it does not necessitate knowledge regarding neither $g$ nor the ground-truth factors during learning (beyond the assumptions stated in the theorems).
    
    A core insight is a connection between the contrastive loss and the cross-entropy between the ground-truth latent distribution and a certain model distribution. For this, we expand the theoretical results obtained by \citet{wang2020understanding}:
    \vspace{\topsep}
    \begin{customtheorem}{\ref*{thm:extended_asym_inf_negatives_CE}}[$\lcontr$ converges to the cross-entropy between latent distributions] \label{thm:asym_inf_negatives_CE}
        If the ground-truth marginal distribution $p$ is uniform, then for fixed $\tau > 0$, as the number of negative samples $M \rightarrow \infty$, the (normalized) contrastive loss converges to
        \begin{equations}
            \lim_{M \rightarrow \infty} \lcontr(f; \tau, M) - \log M + \log |\Z| = \\ \expectunder{\z \sim p(\z)}{H(p(\cdot | \z), q_h(\cdot | \z))}
            \label{eq:contrastive_loss_CE_limit}
        \end{equations}
        where $H$ is the cross-entropy between the ground-truth conditional distribution $p$ over positive pairs and a conditional distribution $\qh$ parameterized by the model $f$,
        \begin{equations} \label{eq:qhjoint}
            \qh(\tzz|\z) &= C_h(\zz)^{-1} e^{h(\tzz)\T h(\zz) /\tau} \\ \text{with} \quad C_h(\zz) :&= \int e^{h(\tzz)\T h(\zz) /\tau} \,\d\tzz,
        \end{equations}
        where $C_h(\z)\in\R^{+}$ is the partition function of $\qh$ (see Appx.~\ref{apx:model_assumptions}).
    \end{customtheorem}
    
    Next, we show that the minimizers $h^{*}$ of the cross-entropy~(\ref{eq:qhjoint}) are isometries in the sense that $\kappa \z^\top\tzz = h^{*}(\z)^\top h^{*}(\tzz)$ for all $\z$ and $\tzz$. In other words, they preserve the dot product between $\z$ and $\tzz$. %
    \vspace{\topsep}
    \begin{customproposition}{\ref*{prop:extended_correct_model_ce_isometry}}[Minimizers of the cross-entropy maintain the dot product] \label{prop:correct_model_ce_isometry}
        Let $\Z = \Sp^{N-1}$, $\tau > 0$ and consider the ground-truth conditional distribution of the form $p(\tzz | \z) = C_p^{-1} \exp(\kappa \tzz^\top \zz)$. Let $h$ map onto a hypersphere with radius $\sqrt{\tau \kappa}$.\footnote{Note that in practice this can be implemented as a learnable rescaling operation as the last operation of the network $f$.} Consider the conditional distribution $q_h$ parameterized by the model, as defined above in Theorem~\ref{thm:asym_inf_negatives_CE}, where the hypothesis class for $h$ (and thus $f$) is assumed to be sufficiently flexible such that $p(\tzz | \zz)$ and $\qh(\tzz|\zz)$ can match.
        If $h$ is a minimizer of the cross-entropy $\E_{p(\tzz | \zz)}[- \log \qh(\tzz | \zz)]$, then $p(\tzz|\zz) = \qh(\tzz | \zz)$ and $\forall \z, \tzz: \kappa \z^\top\tzz = h(\z)^\top h(\tzz)$.
    \end{customproposition}

\subsection{Contrastive learning identifies ground-truth factors on the hypersphere}
    From the strong geometric property of isometry, we can now deduce a key property of the minimizers $h^*$: %
    \vspace{\topsep}
    \begin{customproposition}{\ref*{prop:extended_mazurulamspheres}}[Extension of the Mazur-Ulam theorem to hyperspheres and the dot product]
        \label{prop:mazurulamspheres}
        Let $\Z = \Sp^{N-1}$ and $\Z' = \Sp^{N-1}_{r}$ be the hyperspheres with radius $1$ and $r > 0$, respectively. If $h: \R^N \to \Z'$ is differentiable in the vicinity of $\Z$ and its restriction to $\Z$ maintains the dot product up to a constant factor, i.e., $\forall \z, \tzz \in \Z: r^2 \z^\top \tzz = h(\z)^\top h(\tzz)$, then $h$ is an orthogonal linear transformation scaled by $r$ for all $\zz \in \Z$.
    \end{customproposition}

    In the last step, we combine the previous propositions to derive our main result: the minimizers of the contrastive loss $\lcontr$ solve the demixing problem of nonlinear ICA up to linear transformations, i.e., they identify the original sources $\z$ for observations $g(\z)$ up to orthogonal linear transformations. For a hyperspherical space $\Z$ these correspond to combinations of permutations, rotations and sign flips.
    \vspace{\topsep}
    \begin{customtheorem}{\ref*{thm:extended_ident_matching}}\label{thm:ident_matching}
        Let $\Z = \Sp^{N-1}$, the ground-truth marginal be uniform, and the conditional a vMF distribution (cf. Eq.~\ref{eq:vmf_conditional}). Let the restriction of the mixing function $g$ to $\Z$ be injective and $h$ be differentiable in a vicinity of $\Z$. If the assumed form of $\qh$, as defined above, matches that of $p$, and if $f$ is differentiable and minimizes the CL loss as defined in Eq.~\eqref{eq:contrastive_loss}, then for fixed $\tau > 0$ and $M\to\infty$, $h = f \circ g$ is linear, i.e., $f$ recovers the latent sources up to an orthogonal linear transformation and a constant scaling factor.
    \end{customtheorem}
    Note that we do not assume knowledge of the ground-truth generative model $g$; we only make assumptions about the conditional and marginal distribution of the latents.
    On real data, it is unlikely that the assumed model distribution $\qh$ can exactly match the ground-truth conditional. We do, however, 
    provide empirical evidence that $h$ is still an affine transformation even if there is a severe mismatch, see Sec.~\ref{sec:experiments}.

\subsection{Contrastive learning identifies ground-truth factors on convex bodies in \texorpdfstring{$\mathbb{R}^N$}{RN}} \label{sec:extension_rn}
    While the previous theoretical results require $\Z$ to be a hypersphere, we will now show a similar theorem for the more general case of $\Z$ being a convex body in $\mathbb{R}^N$. Note that the hyperrectangle $[a_1, b_1] \times \ldots \times [a_N, b_N]$ is an example of such a convex body.
    
    We follow a similar three step proof strategy as for the hyperspherical case before:
    (1) We begin again by showing that a properly chosen contrastive loss on convex bodies corresponds to the cross-entropy between the ground-truth conditional and a distribution parametrized by the encoder. For this step, we additionally extend the results of \citet{wang2020understanding} to this latent space and loss function.
    (2) Next, we derive that minimizers of the loss function are isometries of the latent space. Importantly, we do not limit ourselves to a specific metric, thus the result is applicable to a family of contrastive objectives.
    (3) Finally, we show that these minimizers must be affine transformations.
    For a special family of conditional distributions (rotationally asymmetric generalized normal distributions~\citep{subbotin1923law}), we can further narrow the class of solutions to permutations and sign-flips. %
    For the detailed proofs, see Appx.~\ref{apx:rn_extension}. 
    
    As earlier, we assume that the ground-truth marginal distribution of the latents is uniform. However, we now assume that the conditional distribution is exponential:
    \begin{equations} \label{eq:rn_conditional}
        p(\z) &= |\Z|^{-1}, \quad\quad p(\z|\tzz) = C_p^{-1} e^{- \delta(\z, \tzz)} \quad \text{with} \\ C_p(\z) :&= \int e^{-\delta(\z, \tzz)} \,\d\tzz, \quad \xx = g(\z), \quad \txx = g(\tzz),
    \end{equations}
    where $\delta$ is a metric induced by a norm (see Appx.~\ref{apx:rn_gt_assumptions}).
    
    To reflect the differences between this conditional distribution and the one assumed for the hyperspherical case, we need to introduce an adjusted version of the contrastive loss in \eqref{eq:contrastive_loss}:  
    \begin{definition}[$\ldeltacontr$ objective] \label{def:delta_contrastive_loss}
        Let $\delta: \Z \times \Z \to \mathbb{R}$ be a metric on $\Z$. We define the general InfoNCE loss, which uses $\delta$ as a similarity measure, as
        
        \begin{align} \label{eq:delta_contrastive_loss}
            &\ldeltacontr(f; \tau, M) \quad :=\\
            &\underset{\substack{
                (\x, \tx) \sim p_\mathsf{pos} \\
                \{\xx^-_i\}_{i=1}^M \overset{\text{i.i.d.}}{\sim} p_\mathsf{data}
            }}{\mathbb{E}} \hspace{-1em}\Bigg[
            {- \log \frac{e^{-\delta(f(\xx), f(\tx)) / \tau }}{e^{\text{--}\delta(f(\xx), f(\tx)) / \tau } \hspace{-.3em}+\hspace{-.3em} \sum\limits_{i=1}^M e^{\text{--}\delta(f(\xx), f(\xx^\text{--}_i)) / \tau }}}\,\Bigg]. \nonumber
        \end{align}
    \end{definition}
    Note that this is a generalization of the InfoNCE criterion in Eq.~(\ref{eq:contrastive_loss}). In contrast to the objective above, the representations are no longer assumed to be $L^2$ normalized, and the dot-product is replaced with a more general similarity measure $\delta$.
    
    Analogous to the previously demonstrated case for the hypersphere, for convex bodies $\Z$, minimizers of the adjusted $\ldeltacontr$ objective solve the demixing problem of nonlinear ICA up to invertible linear transformations:
    \begin{customtheorem}{\ref*{thm:extended_rn_linear_identifiable}} \label{thm:rn_linear_identifiable}
        Let $\Z$ be a convex body in $\mathbb{R}^N$, $h = f\circ g:\Z\to\Z$, and $\delta$ be a metric or a semi-metric (cf. Lemma~\ref{lem:semimetric} in Appx.~\ref{apx:rn_ce_min_identifiability}), induced by a norm. Further, let the ground-truth marginal distribution be uniform and the conditional distribution be as Eq.~\eqref{eq:rn_conditional}. Let the mixing function $g$ be differentiable and injective. If the assumed form of $\qh$ matches that of $p$, i.e., 
        \begin{equations}
            \qh(\tzz|\z) &= C_q^{-1}(\zz)e^{-\delta(h(\tzz), h(\zz))/\tau}\quad \\ \text{with} \quad C_q(\zz) :&= \int e^{-\delta(h(\tzz), h(\zz))/\tau} \,\d\tzz,
        \end{equations}
        and if $f$ is differentiable and minimizes the $\ldeltacontr$ objective in Eq.~\eqref{eq:delta_contrastive_loss} for $M \to \infty$, we find that $h = f \circ g$ is invertible and affine, i.e., we recover the latent sources up to affine transformations.
    \end{customtheorem}
    Note that the model distribution $\qh$, which is implicitly described by the choice of the objective, must be of the same form as the ground-truth distribution $p$, i.e., both must be based on the same metric. Thus, identifying different ground-truth conditional distributions requires different contrastive $\ldeltacontr$ objectives.
    This result can be seen as a generalized version of Theorem~\ref{thm:ident_matching}, as it is valid for any convex body $\Z \subseteq \mathbb{R}^N$, allowing for a larger variety of conditional distributions.
    
    Finally, under the mild restriction that the ground-truth conditional distribution is based on an $L^p$ similarity measure for $p \geq1, p \neq 2$, $h$ identifies the ground-truth generative factors up to generalized permutations. A generalized permutation matrix $\A$ is a combination of a permutation and element-wise sign-flips, i.e., $\forall \z: (\A\z)_i = \alpha_i \z_{\sigma(i)}$ with $\alpha_i = \pm 1$ and $\sigma$ being a permutation.
    \begin{customtheorem}{\ref*{thm:extended_rn_permutation_identifiable}} \label{thm:rn_permutation_identifiable}
        Let $\Z$ be a convex body in $\mathbb{R}^N$, $h: \Z \to \Z$, and $\delta$ be an $L^\alpha$ metric or semi-metric (cf. Lemma~\ref{lem:semimetric} in Appx.~\ref{apx:rn_ce_min_identifiability}) for $\alpha \geq 1, \alpha \neq 2$. Further, let the ground-truth marginal distribution be uniform and the conditional distribution be as Eq.~\eqref{eq:rn_conditional}, and let the mixing function $g$ be differentiable and invertible. If the assumed form of $\qh(\cdot|\z)$ matches that of $p(\cdot|\z)$, i.e., both use the same metric $\delta$ up to a constant scaling factor, and if $f$ is differentiable and minimizes the $\ldeltacontr$ objective in Eq.~\eqref{eq:delta_contrastive_loss} for $M \to \infty$, we find that $h = f \circ g$ is a composition of input independent permutations, sign flips and rescaling.
    \end{customtheorem}

\section{Experiments} \label{sec:experiments}

\subsection{Validation of theoretical claim} \label{sec:toy_experiments}
    We validate our theoretical claims under both perfectly matching and violated conditions regarding the ground-truth marginal and conditional distributions. We consider source signals of dimensionality $N=10$, and sample pairs of source signals in two steps: First, we sample from the marginal $p(\z)$. For this, we consider both uniform distributions which match our assumptions and non-uniform distributions (e.g., a normal distribution) which violate them. Second, we generate the positive pair by sampling from a conditional distribution $p(\tzz | \z)$.
    Here, we consider matches with our assumptions on the conditional distribution (von Mises-Fisher for $\Z = \Sp^{N-1}$) as well as violations (e.g. normal, Laplace or generalized normal distribution for $\Z = \Sp^{N-1}$). Further, we consider spaces beyond the hypersphere, such as the bounded box (which is a convex body) and the unbounded $\R^N$.
    
    We generate the observations with a multi-layer perceptron (MLP), following previous work~\citep{hyvarinen2016unsupervised,hyvarinen2017nonlinear}.
    Specifically, we use three hidden layers with leaky ReLU units and random weights; to ensure that the MLP $g$ is invertible, we control the condition number of the weight matrices.
    For our feature encoder $f$, we also use an MLP with leaky ReLU units, where the assumed space is denoted by the normalization, or lack thereof, of the encoding. Namely, for the hypersphere (denoted as \emph{Sphere}) and the hyperrectangle (denoted as \emph{Box}) we apply an $L^2$ and $L^\infty$ normalization, respectively. For flexibility in practice, we parameterize the normalization magnitude of the \emph{Box}, including it as part of the encoder's learnable parameters. On the hypersphere we optimize $\lcontr$ and on the hyperrectangle as well as the unbounded space we optimize $\ldeltacontr$. For further details, see Appx.~\ref{apx:experiment_details}.  %
    
    To test for identifiability up to affine transformations, we fit a linear regression between the ground-truth and recovered sources and report the coefficient of determination ($R^2$). To test for identifiability up to generalized permutations, we leverage the mean correlation coefficient (MCC), as used in previous work~\citep{hyvarinen2016unsupervised,hyvarinen2017nonlinear}. For further details, see Appx.~\ref{apx:experiment_details}.
    
    \begin{table*}[t]
        \centering
        \caption{Identifiability up to affine transformations. Mean $\pm$ standard deviation over $5$ random seeds. Note that only the first row corresponds to a setting that matches (\cmark) our theoretical assumptions, while the others show results for violated assumptions (\xmark; see column \emph{M.}). Note that the identity score only depends on the ground-truth space and the marginal distribution defined for the generative process, while the supervised score additionally depends on the space assumed by the model. 
        }
        \resizebox{\textwidth}{!}{%
        \begin{tabular}{ccccccccc}
            \toprule
            \multicolumn{3}{c}{Generative process $g$} & \multicolumn{3}{c}{Model $f$} & \multicolumn{3}{c}{$R^2$ Score [\%]} \\
            Space & $p(\cdot)$ & $p(\cdot|\cdot)$ & Space & $\qh(\cdot|\cdot)$ & M. & Identity & Supervised & Unsupervised \\
            \midrule
            Sphere & Uniform & vMF($\kappa{=}1$) & Sphere & vMF($\kappa{=}1$) & \cmark & $66.98 \pm 2.79$ & $99.71  \pm 0.05$ & $99.42 \pm 0.05$ \\
            Sphere & Uniform & vMF($\kappa{=}10$) & Sphere & vMF($\kappa{=}1$) & \xmark & \dittotikz & \dittotikz & $99.86 \pm 0.01$ \\
            Sphere & Uniform & Laplace($\lambda{=}0.05$) & Sphere & vMF($\kappa{=}1$) & \xmark & \dittotikz & \dittotikz & $99.91 \pm 0.01$ \\
            Sphere & Uniform & Normal($\sigma{=}0.05$) & Sphere & vMF($\kappa{=}1$) & \xmark & \dittotikz & \dittotikz & $99.86 \pm 0.00$\\
            \midrule
            Box & Uniform & Normal($\sigma{=}0.05$) & Unbounded & Normal & \xmark & $67.93 \pm 7.40$ & $99.78 \pm 0.06$ & $99.60 \pm 0.02$ \\
            Box & Uniform & Laplace($\lambda{=}0.05$) & Unbounded & Normal & \xmark & \dittotikz & \dittotikz & $99.64 \pm 0.02$ \\
            Box & Uniform & Laplace($\lambda{=}0.05$) & Unbounded & GenNorm($\beta{=}3$) & \xmark & \dittotikz & \dittotikz & $99.70 \pm 0.02$\\
            Box & Uniform & Normal($\sigma{=}0.05$) & Unbounded & GenNorm($\beta{=}3$) & \xmark & \dittotikz & \dittotikz & $99.69 \pm 0.02$\\
            \midrule
            Sphere & Normal($\sigma{=}1$) & Laplace($\lambda{=}0.05$) & Sphere & vMF($\kappa{=}1$) & \xmark & $63.37 \pm 2.41$ & $99.70 \pm 0.07$ & $99.02 \pm 0.01$ \\
            Sphere & Normal($\sigma{=}1$) & Normal($\sigma{=}0.05$) & Sphere & vMF($\kappa{=}1$) & \xmark & \dittotikz & \dittotikz & $99.02 \pm 0.02$ \\
            \midrule          Unbounded & Laplace($\lambda{=}1$) & Normal($\sigma{=}1$) & Unbounded & Normal & \xmark & $62.49 \pm 1.65$ & $99.65 \pm 0.04$ & $98.13 \pm 0.14$ \\ Unbounded & Normal($\sigma{=}1$) & Normal($\sigma{=}1$) & Unbounded & Normal & \xmark & $63.57 \pm 2.30$ & $99.61 \pm 0.17$ & $98.76 \pm 0.03$ \\
            \bottomrule
        \end{tabular}}
        \label{tab:results_linear}
    \end{table*}
    
    \begin{table*}[t]
        \centering
        \caption{Identifiability up to generalized permutations, averaged over $5$ runs. 
        Note that while Theorem~\ref{thm:extended_rn_permutation_identifiable} requires the model latent space to be a convex body and $p(\cdot|\cdot)=\qh(\cdot|\cdot)$, we find that empirically either is sufficient.
        The results are grouped in four blocks corresponding to different types and degrees of violation of assumptions of our theory showing identifiability up to permutations: (1) no violation, violation of the assumptions on either the (2) space or (3) the conditional distribution, or (4) both.
        }
        \resizebox{\textwidth}{!}{%
        \begin{tabular}{ccccccccc}
            \toprule \multicolumn{3}{c}{Generative process $g$} & \multicolumn{3}{c}{Model $f$} & \multicolumn{3}{c}{MCC Score [\%]} \\
            Space & $p(\cdot)$ & $p(\cdot|\cdot)$ & Space & $\qh(\cdot|\cdot)$ & M. & Identity & Supervised & Unsupervised \\
            \midrule
            Box & Uniform & Laplace($\lambda{=}0.05$) & Box & Laplace & \cmark & $46.55 \pm 1.34$ & $99.93 \pm 0.03$ & $98.62 \pm 0.05$ \\
            Box & Uniform & GenNorm($\beta{=}3$; $\lambda{=}0.05$) & Box & GenNorm($\beta{=}3$) & \cmark & \dittotikz & \dittotikz & $99.90 \pm 0.06$ \\
            \midrule
            Box & Uniform & Normal($\sigma{=}0.05$) & Box & Normal & \xmark & \dittotikz & \dittotikz & $99.77 \pm 0.01$ \\
            Box & Uniform & Laplace($\lambda{=}0.05$) & Box & Normal & \xmark & \dittotikz & \dittotikz & $99.76 \pm 0.02$ \\
            Box & Uniform & GenNorm($\beta{=}3$; $\lambda{=}0.05$) & Box & Laplace & \xmark & \dittotikz & \dittotikz & $98.80 \pm 0.02$ \\
            \midrule
            Box & Uniform & Laplace($\lambda{=}0.05$) & Unbounded & Laplace & \xmark & \dittotikz & $99.97 \pm 0.03$ & $98.57 \pm 0.02$ \\
            Box & Uniform & GenNorm($\beta{=}3$; $\lambda{=}0.05$) & Unbounded & GenNorm($\beta{=}3$) & \xmark & \dittotikz & \dittotikz & $99.85 \pm 0.01$ \\
            \midrule
            Box & Uniform & Normal($\sigma{=}0.05$) & Unbounded & Normal & \xmark & \dittotikz & \dittotikz & $58.26 \pm 3.00$ \\
            Box & Uniform & Laplace($\lambda{=}0.05$) & Unbounded & Normal & \xmark & \dittotikz & \dittotikz & $59.67 \pm 2.33$ \\
            Box & Uniform & Normal($\sigma{=}0.05$) & Unbounded & GenNorm($\beta{=}3$) & \xmark & \dittotikz & \dittotikz & $43.80 \pm 2.15$ \\
            \bottomrule
        \end{tabular}}
        \label{tab:perm_results}
    \end{table*}
    
    We evaluate both identifiability metrics for three different model types.
    First, we ensure that the problem requires nonlinear demixing by considering the identity function for model $f$, which amounts to scoring the observations against the sources (\textbf{Identity Model}).
    Second, we ensure that the problem is solvable within our model class by training our model $f$ with supervision, minimizing the mean-squared error between $f(g(\z))$ and $\z$ (\textbf{Supervised Model}). Third, we fit our model without supervision using a contrastive loss (\textbf{Unsupervised Model}).
      
    Tables~\ref{tab:results_linear} and~\ref{tab:perm_results} show results evaluating identifiability up to affine transformations and generalized permutations, respectively. When assumptions match (see column M.), CL recovers a score close to the empirical upper bound.
    Mismatches in assumptions on the marginal and conditional do not lead to a significant drop in performance with respect to affine identifiability, but do for permutation identifiability compared to the empirical upper bound.
    In many practical scenarios, we use the learned representations to solve a downstream task, thus, identifiability up to affine transformations is often sufficient.
    However, for applications where identification of the individual generative factors is desirable, some knowledge of the underlying generative process is required to choose an appropriate loss function and feature normalization.
    Interestingly, we find that for convex bodies, we obtain identifiability up to permutation even in the case of a normal conditional, which likely is due to the axis-aligned box geometry of the latent domain.
    Finally, note that the drop in performance for identifiability up to permutations in the last group of Tab.~\ref{tab:perm_results} is a natural consequence of either the ground-truth or the assumed conditional being rotationally symmetric, e.g., a normal distribution, in an unbounded space. Here, rotated versions of the latent space are indistinguishable and, thus, the model cannot align the axes of the reconstruction with that of the ground-truth latent space, resulting in a lower score.
    
    To zoom in on how violations of the uniform marginal assumption influence the identifiability achieved by a model in practice, we perform an ablation on the marginal distribution by interpolating between the theoretically assumed uniform distribution and highly locally concentrated distributions.
    In particular, we consider two cases: (1) a sphere ($\mathcal{S}^9$) with a vMF marginal around its north pole for different concentration parameters $\kappa$; (2) a box ($[0,1]^{10}$) with a normal marginal around the box's center for different standard deviations $\sigma$.
    For both cases, Fig.~\ref{fig:uniformity_violation} shows the $R^2$ score as a function of the concentration $\kappa$ and $1/\sigma^2$ respectively (black). As a reference, the concentration of the used conditional distribution is highlighted as a dashed line.
    In addition, we also display the probability mass (0--100\%) that needs to be moved for converting the used marginal distribution (i.e., vMF or normal) into the assumed uniform marginal distribution (blue) as an intuitive measure of the mismatch (i.e., $\frac{1}{2}\int |p(\mathbf{z})\mathrm{-}p_{\mathrm{uni}}|\, \mathrm{d}\mathbf{z}$).
    While, we observe significant robustness to mismatch, in both cases, we see performance drop drastically once the marginal distribution is more concentrated than the conditional distribution of positive pairs. In such scenarios, positive pairs are indistinguishable from negative pairs. 

    \begin{figure}
        \centering
        \includegraphics[width=.9\linewidth]{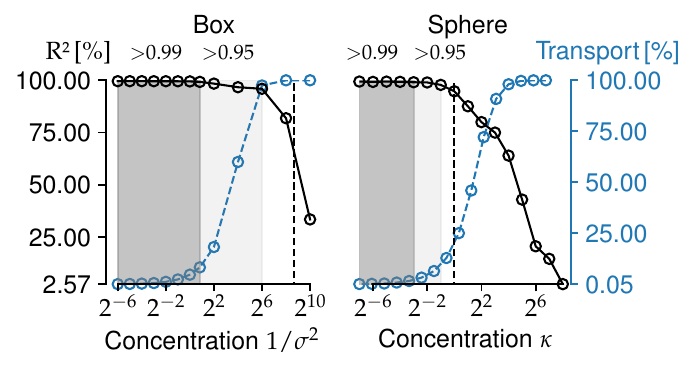}
        \vspace*{-0.5cm}
        \caption{Varying degrees of violation of the uniformity assumption for the marginal distribution. The figure shows the $R^2$ score measuring identifiability up to linear transformations (black) as well as the difference between the used marginal and assumed uniform distribution in terms of probability mass (blue) as a function of the marginal's concentration. The black dotted line indicates the concentration of the used conditional distribution.}
        \label{fig:uniformity_violation}
    \end{figure}
    
\subsection{Extensions to image data}
    Previous studies have demonstrated that representation learning using constrastive learning scales well to complex natural image data \citep{chen2020simple, chen2020big, henaff2020data}.
    Unfortunately, the true generative factors of natural images are inaccessible, thus we cannot evaluate identifiability scores.

    We consider two alternatives.
    First, we evaluate on the recently proposed benchmark \textit{KITTI Masks}~\citep{klindt2020slowvae}, which is composed of segmentation masks of natural videos.
    Second, we contribute a novel benchmark (\textit{3DIdent}; cf. Fig.~\ref{fig:3dident_examples}) which features aspects of natural scenes, e.g. a complex 3D object and different lighting conditions, while still providing access to the continuous ground-truth factors. For further details, see Appx.~\ref{apx:3dident_comparison}. \textit{3DIdent} is available at \href{https://zenodo.org/record/4502485/}{zenodo.org/record/4502485}.

\subsubsection{KITTI Masks} \label{sec:kitti_experiments}
    KITTI Masks~\citep{klindt2020slowvae} is composed of pedestrian segmentation masks extracted from an autonomous driving vision benchmark KITTI-MOTS~\citep{geiger2012are}, with natural shapes and continuous natural transitions. We compare to SlowVAE~\citep{klindt2020slowvae}, the state-of-the-art on the considered dataset. In our experiments, we use the same training hyperparameters (for details see Appx.~\ref{apx:experiment_details}) and (encoder) architecture as \citet{klindt2020slowvae}. The positive pairs consist of nearby frames with a time separation $\overline{\Delta t}$.
    
    \begin{table}
        \centering
        \caption{\textbf{KITTI Masks}. Mean $\pm$ standard deviation over 10 random seeds. $\overline{\Delta t}$ indicates the average temporal distance of frames used.}
        \label{table:MOTSComp}
        \small
        \begin{tabular}{clll}
            \toprule
                    & Model & Model Space & MCC [\%] \\
            \midrule
            \parbox[t]{16mm}{\multirow{5}{*}{$\overline{\Delta t}=0.05s$}} & SlowVAE &  Unbounded & 66.1 $\pm$ 4.5 \\
            & Laplace &  Unbounded & 77.1 $\pm$ 1.0 \\
            &  Laplace &  Box & 74.1 $\pm$ 4.4 \\
            &  Normal &  Unbounded & 58.3 $\pm$ 5.4 \\
            &  Normal &  Box & 59.9 $\pm$ 5.5 \\
             \midrule
           \parbox[t]{16mm}{\multirow{5}{*}{$\overline{\Delta t}=0.15s$}} &  SlowVAE & Unbounded & 79.6 $\pm$ 5.8 \\
            &  Laplace &  Unbounded & 79.4 $\pm$ 1.9 \\
            &  Laplace &  Box & 80.9 $\pm$ 3.8 \\
            &  Normal &  Unbounded & 60.2 $\pm$ 8.7 \\
            &  Normal &  Box & 68.4 $\pm$ 6.7 \\
            \bottomrule
        \end{tabular}
    \end{table}
    
    As argued and shown in \citet{klindt2020slowvae}, the transitions in the ground-truth latents between nearby frames is sparse. Unsurprisingly then, Table~\ref{table:MOTSComp} shows that assuming a Laplace conditional as opposed to a normal conditional in the contrastive loss leads to better identification of the underlying factors of variation. %
    SlowVAE also assumes a Laplace conditional~\citep{klindt2020slowvae} but appears to struggle if the frames of a positive pair are too similar ($\overline{\Delta t}=0.05s$).
    This degradation in performance is likely due to the limited expressiveness of the decoder deployed in SlowVAE.

\subsubsection{3DIdent} \label{sec:3dident_experiments}
    
\paragraph{Dataset description}
    \begin{figure*}[htb]
        \centering
        \includegraphics[width=\textwidth]{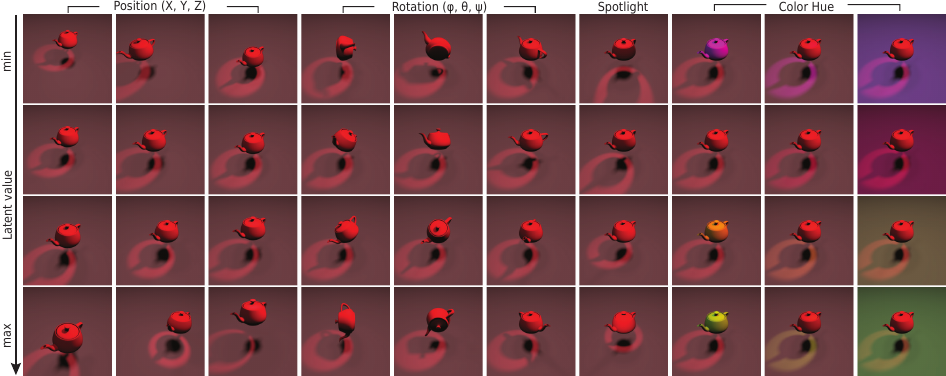}
        \caption{\textbf{3DIdent}. Influence of the latent factors $\z$ on the renderings $\x$. Each column corresponds to a traversal in one of the ten latent dimensions while the other dimensions are kept fixed. %
        }
        \label{fig:3dident_examples}
    \end{figure*}
    
    We build on \citep{johnson2017clevr} and use the Blender rendering engine \citep{blender} to create visually complex 3D images (see Fig.~\ref{fig:3dident_examples}). Each image in the dataset shows a colored 3D object which is located and rotated above a colored ground in a 3D space. Additionally, each scene contains a colored spotlight focused on the object and located on a half-circle around the scene. The observations are encoded with an RGB color space, and the spatial resolution is $224\times224$ pixels.

    The images are rendered based on a $10$-dimensional latent, where: (1) three dimensions describe the XYZ position, (2) three dimensions describe the rotation of the object in Euler angles, (3) two dimensions describe the color of the object and the ground of the scene, respectively, and (4) two dimensions describe the position and color of the spotlight. We use the HSV color space to describe the color of the object and the ground with only one latent each by having the latent factor control the hue value. For more details on the dataset see Sec.~\ref{apx:3dident_details}.
    
    The dataset contains $250\,000$ observation-latent pairs where the latents are uniformly sampled from the hyperrectangle $\Z$. To sample positive pairs $(\z, \tzz)$ we first sample a value $\tzz'$ from the data conditional $p(\tzz'|\z)$, and then use nearest-neighbor matching\footnote{We used an Inverted File Index (IVF) with Hierarchical Navigable Small World (HNSW) graph exploration for fast indexing.} implemented by FAISS \citep{JDH17} to find the latent $\tzz$ closest to $\tzz'$ (in $L^2$ distance) for which there exists an image rendering. In addition, unlike previous work~\citep{locatello2018challenging}, we create a hold-out test set with $25\,000$ distinct observation-latent pairs. 

\paragraph{Experiments and Results}
    \begin{table*}[htb]
        \vspace{-0.05cm}
        \centering
        \caption{Identifiability up to affine transformations on the test set of 3DIdent. Mean $\pm$ standard deviation over 3 random seeds. As earlier, only the first row corresponds to a setting that matches the theoretical assumptions for linear identifiability; the others show distinct violations. Supervised training with unbounded space achieves scores of $R^2=(98.67 \pm 0.03)$\% and $\text{MCC}=(99.33 \pm 0.01)$\%. The last row refers to using the image augmentations suggested by \citet{chen2020simple} to generate positive image pairs.
        For performance on the training set, see Appx.~Table~\ref{tab:3dident_results_train}.
        }
        \small
        \begin{tabular}{ccccccc}
            \toprule
            Dataset & \multicolumn{3}{c}{Model $f$} & Identity [\%] & \multicolumn{2}{c}{Unsupervised [\%]} \\
            $p(\cdot|\cdot)$ & Space & $\qh(\cdot|\cdot)$ & M. & $R^2$ & $R^2$ & MCC \\
            \midrule
            
            Normal & Box & Normal & \cmark & $5.25 \pm 1.20$ & $96.73 \pm 0.10$ & $98.31 \pm 0.04$\\            

            Normal & Unbounded & Normal & \xmark & \dittotikz & $96.43 \pm 0.03$ & $54.94 \pm 0.02$\\

            Laplace & Box & Normal & \xmark & \dittotikz & $96.87 \pm 0.08$ & $98.38 \pm 0.03$\\
            
            Normal & Sphere & vMF & \xmark & \dittotikz & $65.74 \pm 0.01$ & $42.44 \pm 3.27$\\
            
            Augm. & Sphere & vMF & \xmark & \dittotikz & $45.51 \pm 1.43$ & $46.34 \pm 1.59$\\

            \bottomrule
        \end{tabular}
        \label{tab:3dident_results_test}
    \end{table*}
    
    We train a convolutional feature encoder $f$ composed of a ResNet18 architecture~\citep{he2015deep} and an additional fully-connected layer, with a LeakyReLU nonlinearity as the hidden activation. For more details, see Appx.~\ref{apx:experiment_details}. Following the same methodology as in Sec.~\ref{sec:toy_experiments}, i) depending on the assumed space, the output of the feature encoder is normalized accordingly and ii) in addition to the CL models, we also train a supervised model to serve as an upper bound on performance. We consider normal and Laplace distributions for positive pairs. Note, that due to the finite dataset size we only sample from an approximation of these distributions.
    
    As in Tables~\ref{tab:results_linear} and~\ref{tab:perm_results}, the results in Table~\ref{tab:3dident_results_test} demonstrate that CL reaches scores close to the topline (supervised) performance, and mismatches between the assumed and ground-truth conditional distribution do not harm the performance significantly. However, if the hypothesis class of the encoder is too restrictive to model the ground-truth conditional distribution, we observe a clear drop in performance, i.e., mapping a box onto a sphere. Note, that this corresponds to the InfoNCE objective for $L^2$-normalized representations, commonly used for self-supervised representation learning~\citep{wu2018unsupervised, he2020momentum, tian2019contrastive, bachman2019learning,chen2020simple}.
    Finally, the last result shows that leveraging image augmentations~\citep{chen2020simple} as opposed to sampling from a specified conditional distribution of positive pairs $p(\cdot|\cdot)$ results in a performance drop. For details on the experiment, see Appx. Sec.~\ref{apx:experiment_details}.  We explain this with the greater mismatch between the conditional distribution assumed by the model and the conditional distribution induced by the augmentations. 
    In all, we demonstrate validation of our theoretical claims even for generative processes with higher visual complexity than those considered in Sec.~ \ref{sec:toy_experiments}.

\section{Conclusion}
    We showed that objectives belonging to the InfoNCE family, the basis for a number of state-of-the-art techniques in self-supervised representation learning, can uncover the true generative factors of variation underlying the observational data. To succeed, these objectives implicitly encode a few weak assumptions about the statistical nature of the underlying generative factors. While these assumptions will likely not be exactly matched in practice, we showed empirically that the underlying factors of variation are identified even if theoretical assumptions are severely violated.
    
    Our theoretical and empirical results suggest that the representations found with contrastive learning implicitly (and approximately) invert the generative process of the data. This could explain why the learned representations are so useful in many downstream tasks. It is known that a decisive aspect of contrastive learning is the right choice of augmentations that form a positive pair. We hope that our framework might prove useful for clarifying the ways in which certain augmentations affect the learned representations, and for finding improved augmentation schemes.
 
    Furthermore, our work opens avenues for constructing more effective contrastive losses. As we demonstrate, imposing a contrastive loss informed by characteristics of the latent space can considerably facilitate inferring the correct semantic descriptors, and thus boost performance in downstream tasks. 
    While our framework already allows for a variety of \emph{conditional} distributions, it is an interesting open question how to adapt it to \emph{marginal} distributions beyond the uniform implicitly encoded in InfoNCE. Also, future work may extend our theoretical framework by incorporating additional assumptions about our visual world, such as compositionality, hierarchy or objectness. 
    Accounting for such inductive biases holds enormous promise in forming the basis for the next generation of self-supervised learning algorithms. 
    
    Taken together, we lay a strong theoretical foundation for not only understanding but extending the success of state-of-the-art self-supervised learning techniques.

\subsection*{Author contributions}
    The project was initiated by WB.
    RSZ, StS and WB jointly derived the theory. RSZ and YS implemented and executed the experiments. The 3DIdent dataset was created by RSZ with feedback from StS, YS, WB and MB.
    RSZ, YS, StS and WB contributed to the final version of the manuscript.

\subsection*{Acknowledgements}
    We thank
    Muhammad Waleed Gondal,
    Ivan Ustyuzhaninov,
    David Klindt,
    Lukas Schott,
    Luisa Eck,
    and Kartik Ahuja
    for helpful discussions.
    We thank Bozidar Antic, Shubham Krishna and Jugoslav Stojcheski for ideas regarding the design of 3DIdent.
    We thank the International Max Planck Research School for Intelligent Systems (IMPRS-IS) for supporting RSZ, YS and StS.
    StS acknowledges his membership in the European Laboratory for Learning and Intelligent Systems (ELLIS) PhD program.
    We acknowledge support from the German Federal Ministry of Education and Research (BMBF) through the Competence Center for Machine Learning (TUE.AI, FKZ 01IS18039A) and the Bernstein Computational Neuroscience Program T\"ubingen (FKZ: 01GQ1002). WB acknowledges support via his Emmy Noether Research Group funded by the German Science Foundation (DFG) under grant no. BR 6382/1-1 as well as support by Open Philantropy and the Good Ventures Foundation. MB and WB acknowledge funding from the MICrONS program of the Intelligence Advanced Research Projects Activity (IARPA) via Department of Interior/Interior Business Center (DoI/IBC) contract number D16PC00003.

\clearpage
\bibliography{references}
\bibliographystyle{icml2020}

\clearpage
\appendix

\section{Appendix}
\subsection{Extended Theory for Hyperspheres}
\subsubsection{Assumptions} \label{apx:assumptions}
\paragraph{Generative Process} \label{apx:gt_assumptions}
    Let the generator $g : \R^N \to \X$ with $\X \subseteq \R^K$ and $K\geq N$. Further, let the restriction of $g$ to the space $\Z=\Sp^{N-1} \subset \R^N$ be injective and $g$ be differentiable in the vicinity of $\Z$.
    We assume that the marginal distribution $p(\z)$ over latent variables $\z \in \Z$ is uniform:
    \begin{align}
        p(\z) = \frac{1}{|\Z|}.
    \end{align}
    Further, we assume that the conditional distribution over positive pairs $p(\tzz | \z)$ is a von Mises-Fisher (vMF) distribution
    \begin{align}
        p(\tzz|\z) &= C_p^{-1} e^{\kappa \z^\top \tzz}\quad \\ \text{with} \quad C_p :&= \int e^{\kappa \boldsymbol{\eta}^\top\tzz} \,\d\tzz ,
    \end{align}
    where $\kappa$ is a parameter controlling the width of the distribution and  $\boldsymbol{\eta}$ is any vector on the hypersphere.
    Finally, we assume that during training one has access to observations $\x$, which are samples from these distributions transformed by the generator function $g$.

\paragraph{Model} \label{apx:model_assumptions}
    Let $f : \X \to \Sp_r^{N-1}$, where $\Sp_r^{N-1}$ denotes a hypersphere with radius $r$. The parameters of this model are optimized using contrastive learning. We associate a conditional distribution $\qh(\tzz|\zz)$ with our model $f$ through $h = f \circ g$ and
    \begin{equations}
        \qh(\tzz|\z) &= C_q^{-1}(\zz)e^{h(\tzz)^\top h(\zz)/\tau}\quad \\ \text{with} \quad C_q(\zz) :&= \int e^{h(\tzz)^\top h(\zz)/\tau} \,\d\tzz,
    \end{equations}
    where $C_q(\zz)$ is the partition function and $\tau > 0$ is a scale parameter.

\subsubsection{Proofs for Sec.~3} \label{apx:proofs}
    We begin by recalling a result of \citet{wang2020understanding}, where the authors show an asymptotic relation between the contrastive loss $\lcontr$ and two loss functions, the \emph{alignment} loss $\lalign$ and the \emph{uniformity} loss $\lunif$:
    \begin{customproposition}{A}[Asymptotics of $\lcontr$, \citealp{wang2020understanding}]\label{prop:asym_inf_negatives}
        For fixed $\tau > 0$, as the number of negative samples $M \rightarrow \infty$, the (normalized) contrastive loss converges to
        \begin{equations}
            \lim_{M \rightarrow \infty} \lcontr(f; \tau, M) - \log M \\ 
            = \lalign(f; \tau) + \lunif(f; \tau),
            \label{eq:contrastive_loss_limit}
        \end{equations}
        where
        \begin{equations}
            \lalign(f; \tau) &:= - \frac{1}{\tau} \expectunder{(\tzz, \zz) \sim p(\tzz, \zz)}{(f \circ g)(\zz)\T (f \circ g)(\zz)} \\
            \lunif(f; \tau) &:= \expectunder{\zz \sim p(\zz)}{\log \expectunder{\tzz \sim p(\tzz)}{e^{(f \circ g)(\tzz)\T (f \circ g)(\zz)/\tau}}}.
        \end{equations}
    \end{customproposition}
    \begin{proof}
        See Theorem~1 of \citet{wang2020understanding}. Note that they originally formulated the losses in terms of observations $\x$ and not in terms of the latent variables $\z$. However, this modified version simplifies notation in the following.
    \end{proof}

    Based on this result, we show that the contrastive loss $\lcontr$ asymptotically converges to the cross-entropy between the ground-truth conditional $p$ and our assumed model conditional distribution $q_h$, up to a constant.
    This is notable, because given the correct model specification for $q_h$, it is well-known that the cross-entropy is minimized iff $q_h = p$, i.e., the ground-truth conditional distribution and the model distribution will match.
    
    \begin{theorem}[$\lcontr$ converges to the cross-entropy between latent distributions] \label{thm:extended_asym_inf_negatives_CE}
        If the ground-truth marginal distribution $p$ is uniform, then for fixed $\tau > 0$, as the number of negative samples $M \rightarrow \infty$, the (normalized) contrastive loss converges to
        \begin{equations}
            \lim_{M \rightarrow \infty} \lcontr(f; \tau, M) - \log M + \log |\Z| = \\ \expectunder{\z \sim p(\z)}{H(p(\cdot | \z), q_h(\cdot | \z))}
            \label{eq:extended_contrastive_loss_CE_limit}
        \end{equations}
        where $H$ is the cross-entropy between the ground-truth conditional distribution $p$ over positive pairs and a conditional distribution $\qh$ parameterized by the model $f$, and $C_h(\z)\in\R^{+}$ is the partition function of $\qh$ (see Appendix~\ref{apx:model_assumptions}):
        \begin{equations} \label{eq:extended_qhjoint}
            \qh(\tzz|\z) &= C_h(\zz)^{-1} e^{h(\tzz)\T h(\zz) /\tau}  \\ \text{with}\quad C_h(\zz) :&= \int e^{h(\tzz)\T h(\zz) /\tau} \,\d\tzz.
        \end{equations}
    \end{theorem}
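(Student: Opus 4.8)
The plan is to build directly on Proposition~\ref{prop:asym_inf_negatives}: it already identifies $\lim_{M\to\infty}\lcontr(f;\tau,M)-\log M$ with $\lalign(f;\tau)+\lunif(f;\tau)$, so it suffices to establish the purely algebraic identity
\[
\lalign(f;\tau)+\lunif(f;\tau)+\log|\Z| \;=\; \expectunder{\z\sim p(\z)}{H\big(p(\cdot\mid\z),\,\qh(\cdot\mid\z)\big)},
\]
after which adding $\log|\Z|$ to both sides of Proposition~\ref{prop:asym_inf_negatives} gives the stated limit. Throughout I write $h=f\circ g$.

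First I would expand the cross-entropy using the explicit form of $\qh$ from Eq.~\eqref{eq:extended_qhjoint}. Since $-\log\qh(\tzz\mid\z)=\log C_h(\z)-h(\tzz)^\top h(\z)/\tau$ and $p(\cdot\mid\z)$ integrates to one,
\[
H\big(p(\cdot\mid\z),\qh(\cdot\mid\z)\big) \;=\; \log C_h(\z)\;-\;\frac{1}{\tau}\,\expectunder{\tzz\sim p(\tzz\mid\z)}{h(\tzz)^\top h(\z)}.
\]
Taking $\expectunder{\z\sim p(\z)}{\cdot}$ of both sides, the second term is by definition $\lalign(f;\tau)$, so it remains to match $\expectunder{\z\sim p(\z)}{\log C_h(\z)}$ with $\lunif(f;\tau)+\log|\Z|$.

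This is exactly where uniformity of the ground-truth marginal is used. Because $p(\tzz)=|\Z|^{-1}$, the inner expectation defining $\lunif$ is
\[
\expectunder{\tzz\sim p(\tzz)}{e^{h(\tzz)^\top h(\z)/\tau}} \;=\; \frac{1}{|\Z|}\int e^{h(\tzz)^\top h(\z)/\tau}\,\d\tzz \;=\; \frac{C_h(\z)}{|\Z|},
\]
hence $\lunif(f;\tau)=\expectunder{\z\sim p(\z)}{\log C_h(\z)}-\log|\Z|$. Substituting this back yields the desired identity, and composing it with Proposition~\ref{prop:asym_inf_negatives} completes the proof.

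I do not anticipate a genuine obstacle; the argument is bookkeeping once Proposition~\ref{prop:asym_inf_negatives} is granted. The only subtlety is to track where the assumptions enter: uniformity of $p$ is what makes $p(\tzz)$ cancel the normalizer of $\qh$ up to the constant $|\Z|$, and one should also note that the marginal of the second coordinate of $p(\tzz,\z)$ is again the uniform $p$ — the vMF conditional $p(\tzz\mid\z)\propto e^{\kappa\z^\top\tzz}$ is symmetric in its arguments and $p(\z)$ is uniform — so that the alignment expectation and the cross-entropy expectation are taken over the same distribution. Every remaining step is a direct substitution.
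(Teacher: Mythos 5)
Your proof is correct and follows essentially the same route as the paper: both reduce the claim to Proposition~\ref{prop:asym_inf_negatives} and then verify the algebraic identity $\lalign+\lunif+\log|\Z|=\expectunder{\z\sim p(\z)}{H(p(\cdot|\z),\qh(\cdot|\z))}$, with uniformity of the marginal entering exactly where you place it, namely in converting $\expectunder{\tzz\sim p(\tzz)}{e^{h(\tzz)\T h(\z)/\tau}}$ into $C_h(\z)/|\Z|$. The only difference is direction of presentation — the paper expands the cross-entropy and recognizes the alignment and uniformity terms at the end, whereas you start from the decomposition and work toward the cross-entropy — which is immaterial.
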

    \begin{proof}
        The cross-entropy between the conditional distributions $p$ and $q_h$ is given by
        \begin{align}
            &\expectunder{\z \sim p(\z)}{H(p(\cdot | \z), q_h(\cdot | \z))} \\
            =& \expectunder{\z \sim p(\z)}{\expectunder{
            \tzz \sim p(\tzz | \z)}{-\log q_h(\tzz| \z)}} \\
            =& \expectunder{
            \tzz,\zz\sim p(\tzz , \z)}{- \frac{1}{\tau}h(\tzz)^\top h(\zz) + \log C_h(\z)} \\
            =& - \frac{1}{\tau}\expectunder{
            \tzz,\zz\sim p(\tzz , \zz)}{h(\tzz)^\top h(\zz)} + \expectunder{\zz \sim p(\zz)}{\log C_h(\zz)}. \\
            \intertext{Using the definition of $C_h$ in Eq.~\eqref{eq:extended_qhjoint} we obtain}
            =& - \frac{1}{\tau} \expectunder{
            \tzz,\zz\sim p(\tzz , \zz)}{h(\tzz)^\top h(\zz)} \\
            &\quad\quad +\expectunder{\zz \sim p(\zz)}{\log \int_\Z  e^{h(\tzz)^\top h(\zz)/\tau} \,\d\tzz}.\\
            \intertext{By assumption the marginal distribution is uniform, i.e., $p(\z) = |\Z|^{-1}$. We expand by $|\Z| |\Z|^{-1}$ and estimate the integral by sampling from $p(\z) = |\Z|^{-1}$, yielding}
            =& - \frac{1}{\tau}\expectunder{
            \tzz,\zz\sim p(\tzz , \zz)}{h(\tzz)^\top h(\zz)} \\
            &\quad\quad +\expectunder{\zz \sim p(\zz)}{\log |\Z| \expectunder{\tzz \sim p(\tzz)}{e^{h(\tzz)^\top h(\zz)/\tau}}}\\
            =& - \frac{1}{\tau}\expectunder{
            \tzz,\zz\sim p(\tzz , \zz)}{h(\tzz)^\top h(\zz)}  \\
            &\quad\quad +\expectunder{\zz \sim p(\zz)}{\log \expectunder{\tzz \sim p(\tzz)}{e^{h(\tzz)^\top h(\zz)/\tau}}} + \log |\Z|. \\
            \intertext{By inserting the definition $h = f \circ g$,}
            =& - \frac{1}{\tau}\expectunder{
            \tzz,\zz\sim p(\tzz , \zz)}{(f \circ g)(\tzz)^\top (f \circ g)(\zz)} \\
            &\quad\quad +\expectunder{\zz \sim p(\zz)}{\log \expectunder{\tzz \sim p(\tzz)}{e^{(f \circ g)(\tzz)^\top (f \circ g)(\zz) /\tau}}} \\
            &\quad\quad + \log |\Z|, \\
            \intertext{we can identify the losses introduced in Proposition~\ref{prop:asym_inf_negatives},}
            =& \lalign(f; \tau) + \lunif(f; \tau)  + \log |\Z| ,
            \intertext{which recovers the original alignment term and the uniformity term for maximimizing entropy by means of a von Mises-Fisher KDE up to the constant $\log |\Z|$. According to Proposition~\ref{prop:asym_inf_negatives} this equals}
            =& \lim_{M \rightarrow \infty} \lcontr(f; \tau, M) - \log M + \log |Z|,
        \end{align}
        which concludes the proof.
    \end{proof}

    \vspace{\topsep}
    \begin{proposition}[Minimizers of the cross-entropy maintain the dot product] \label{prop:extended_correct_model_ce_isometry}
        Let $\Z = \Sp^{N-1}$, $\tau > 0$ and consider the ground-truth conditional distribution of the form $p(\tzz | \z) = C_p^{-1} \exp(\kappa \tzz^\top \zz)$. Let $h$ map onto a hypersphere with radius $\sqrt{\tau \kappa}$.\footnote{Note that in practice this can be implemented as a learnable rescaling operation of the network $f$.} Consider the conditional distribution $q_h$ parameterized by the model, as defined above in Theorem~\ref{thm:asym_inf_negatives_CE}, where the hypothesis class for $h$ is assumed to be sufficiently flexible such that $p(\tzz | \zz)$ and $\qh(\tzz|\zz)$ can match.
        If $h$ is a minimizer of the cross-entropy $\E_{p(\tzz | \zz)}[- \log \qh(\tzz | \zz)]$, then $p(\tzz|\zz) = \qh(\tzz | \zz)$ and $\forall \z, \tzz: \kappa \z^\top\tzz = h(\z)^\top h(\tzz)$.
    \end{proposition}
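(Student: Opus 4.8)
The plan is to combine the classical variational characterization of cross-entropy with the exponential-family form of $p(\cdot\mid\z)$ and $\qh(\cdot\mid\z)$, and then read off the dot-product identity by a symmetry argument. First I would decompose the objective as
\[
  \expectunder{\z\sim p(\z)}{H\big(p(\cdot\mid\z),\qh(\cdot\mid\z)\big)}
  = \expectunder{\z\sim p(\z)}{H\big(p(\cdot\mid\z)\big)}
  + \expectunder{\z\sim p(\z)}{\KL\big(p(\cdot\mid\z)\,\big\|\,\qh(\cdot\mid\z)\big)},
\]
where the first term does not depend on $h$ and is finite (a vMF density has finite differential entropy) and the second is nonnegative. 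The ``sufficiently flexible'' assumption guarantees some $h$ in the hypothesis class with $\qh(\cdot\mid\z)=p(\cdot\mid\z)$ — e.g.\ $h=\sqrt{\tau\kappa}\,R$ with $R$ orthogonal, realizable via $f=\sqrt{\tau\kappa}\,R\,g^{-1}$ on $g(\Z)$ since $g$ is injective — so the infimum of the objective is attained and equals $\expectunder{\z}{H(p(\cdot\mid\z))}$. Hence any minimizer $h$ must drive the expected KL term to zero; since $p(\z)$ is uniform (fully supported) on $\Sp^{N-1}$ and every $p(\cdot\mid\z)$ is a fully supported vMF density, this yields $\qh(\tzz\mid\z)=p(\tzz\mid\z)$ for almost every — and, by continuity of $h$ in the regimes where the proposition is applied, every — pair $(\z,\tzz)$.

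Once the densities coincide pointwise, the rest is bookkeeping. Taking logarithms of $C_p^{-1}e^{\kappa\tzz^\top\z}=C_h(\z)^{-1}e^{h(\tzz)^\top h(\z)/\tau}$ and rearranging gives
\[
  \tfrac{1}{\tau}\,h(\tzz)^\top h(\z) \;=\; \kappa\,\tzz^\top\z \;+\; \big(\log C_h(\z)-\log C_p\big)
\]
for all $\z,\tzz$. The left-hand side and the term $\kappa\,\tzz^\top\z$ are symmetric under swapping $\z$ and $\tzz$, whereas $\log C_h(\z)-\log C_p$ depends on $\z$ alone; hence this residual is a constant $c_0$ independent of $\z$. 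Evaluating the identity at $\tzz=\z$ and using the radius constraint $\|h(\z)\|^2=\tau\kappa$ together with $\|\z\|^2=1$ forces $\kappa=\kappa+c_0$, i.e.\ $c_0=0$ (equivalently $C_h(\z)\equiv C_p$, confirming $\qh$ is a genuine vMF density). This leaves $\kappa\,\z^\top\tzz=\tfrac{1}{\tau}h(\z)^\top h(\tzz)$, i.e.\ the asserted preservation of the dot product up to the scale fixed by the chosen radius of the encoder's image.

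The step I expect to be the real obstacle is the first one — not the algebra but the justification that a minimizer attains the information-theoretic lower bound with the KL term exactly vanishing. This relies on (i) finiteness of $\expectunder{\z}{H(p(\cdot\mid\z))}$ so that the decomposition is well posed, (ii) the ``sufficiently flexible'' hypothesis genuinely containing a realizing $h$ — otherwise one only has an infimum that need not be attained, and a near-minimizer need not make the KL term small in a usable sense — and (iii) upgrading the almost-everywhere equality of densities to an everywhere statement, which is where continuity of $h$ enters (available under the differentiability hypotheses that accompany Theorem~\ref{thm:ident_matching}, which invokes this proposition). Once $\qh=p$ pointwise is secured, the dot-product identity is forced and uses only the fixed-radius normalization of $f$.
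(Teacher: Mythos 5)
Your proof is correct and follows essentially the same route as the paper's: realizability of $p$ by $\qh$ forces the cross-entropy minimizer to satisfy $\qh=p$ pointwise, and evaluating the resulting log-density identity on the diagonal $\tzz=\z$ (using the radius constraint $\|h(\z)\|^2=\tau\kappa$) pins down the normalization constants and yields the dot-product identity; your handling of the attainment/KL-vanishing step is in fact more careful than the paper's one-line assertion that the global minimum is reached when the distributions match. The only discrepancy is that your derivation (correctly, given the stated radius) yields $\kappa\,\z^\top\tzz=\tfrac{1}{\tau}\,h(\z)^\top h(\tzz)$, whereas the proposition as stated omits the $1/\tau$ --- a factor the paper's own final display also silently drops and which is immaterial only when $\tau=1$.
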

    \begin{proof}\vspace{-1.8\topsep}
        By assumption, $\qh(\tzz|\zz)$ is powerful enough to match $p(\tzz|\zz)$ for the correct choice of $h$ --- in particular, for $h(\z) = \sqrt{\tau \kappa} \z$. The global minimum of the cross-entropy between two distributions is reached if they match by value and have the same support.
        Thus, this means 
        \begin{equation}
            p(\tzz|\z) = \qh(\tzz|\z).
        \end{equation}
        This expression also holds true for $\tzz = \z$; additionally using that $h$ maps from a unit hypersphere to one with radius $\sqrt{\tau \kappa}$ yields
        \begin{align}
            \hspace{0.6cm} && p(\z|\z) &= \qh(\z|\z) \\
            \Leftrightarrow && C_p^{-1} e^{\kappa \z^\top \z} &= C_h(\z)^{-1} e^{h(\z)^\top h(\z)/\tau} \\
            \Leftrightarrow && C_p^{-1} e^{\kappa} &= C_h(\z)^{-1} e^{\kappa} \\
            \Leftrightarrow && C_p &= C_h.
        \end{align}
        As the normalization constants are identical we get for all $\zz, \tzz \in \Z$
        \begin{equation}
            e^{\kappa \z^\top \tzz} = e^{h(\z)^\top h(\tzz)} \Leftrightarrow \kappa \z^\top \tzz = h(\z)^\top h(\tzz).
        \end{equation}
    \end{proof}

    \begin{proposition}[Extension of the Mazur-Ulam theorem to hyperspheres and the dot product] \label{prop:extended_mazurulamspheres}
        Let $\Z = \Sp^{N-1}$ and $\Z' = \Sp^{N-1}_{r}$ be the hyperspheres with radius $1$ and $r > 0$, respectively. If $h: \R^N \to \Z'$ is differentiable in the vicinity of $\Z$ and its restriction to $\Z$ maintains the dot product up to a constant factor, i.e., $\forall \z, \tzz \in \Z: r^2 \z^\top \tzz = h(\z)^\top h(\tzz)$, then $h$ is an orthogonal linear transformation scaled by $r$ for all $\zz \in \Z$.
    \end{proposition}
    \begin{proof}\vspace{-1.8\topsep}
        First, we begin with the case $r = 1$.
        As $h$ maintains the dot product we have:
        \begin{align}
            \forall \zz, \tzz \in \Z: \zz^\top \tzz &= h(\zz)^\top h(\tzz).
            \intertext{
        We consider the partial derivative w.r.t. $\zz$ and obtain:
        }
            \forall \zz, \tzz \in \Z: \tzz &= \J_h^\top(\zz) h(\tzz).\label{eq:prop2-first-deriv}
            \intertext{Taking the partial derivative w.r.t. $\tzz$ yields}
            \forall \zz, \tzz \in \Z: \mathbf{I} &= \J_h^\top(\zz)\J_h(\tzz).
        \intertext{
        We can now conclude
        }
            \forall \zz, \tzz \in \Z:
            \J_h(\tzz)^{-1} &= \J_h^\top(\zz).
        \end{align}
        which implies a constant Jacobian matrix $\J_h(\zz) = \J_h$ as the identity holds on all points in $\Z$, and further that the Jacobian $\J_h$ is orthogonal.
        Hence, $\forall \zz \in \Z: h(\zz) = \J_h \zz$ is an orthogonal linear transformation.
        
        Finally, for $r \neq 1$ we can leverage the previous result by introducing $h'(\zz) :=  h(\zz) / r$. For $h'$ the previous argument holds, implying that $h'$ is an orthogonal transformation. Therefore, the restriction of $h$ to $\Z$ is an orthogonal linear transformation scaled by $r^2$.
    \end{proof}

    Taking all of this together, we can now prove Theorem~\ref{thm:ident_matching}:
    \vspace{\topsep}
    \begin{theorem} \label{thm:extended_ident_matching}
        Let $\Z = \Sp^{N-1}$, the ground-truth marginal be uniform, and the conditional a vMF distribution (cf. Eq.~\ref{eq:vmf_conditional}). Let the restriction of the mixing function $g$ to $\Z$ be injective and $h$ be differentiable in a vicinity of $\Z$. If the assumed form of $\qh$, as defined above, matches that of $p$, and if $f$ is differentiable and minimizes the CL loss as defined in Eq.~\eqref{eq:contrastive_loss}, then for fixed $\tau > 0$ and $M\to\infty$, $h = f \circ g$ is linear, i.e., $f$ recovers the latent sources up to an orthogonal linear transformation and a constant scaling factor.
    \end{theorem}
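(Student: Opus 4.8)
The plan is to assemble Theorem~\ref{thm:extended_asym_inf_negatives_CE}, Proposition~\ref{prop:extended_correct_model_ce_isometry} and Proposition~\ref{prop:extended_mazurulamspheres} in sequence, with $g$ only entering at the very end. First I would use Theorem~\ref{thm:extended_asym_inf_negatives_CE}: since the ground-truth marginal $p$ is uniform, for fixed $\tau>0$ we have
\[
\lim_{M\to\infty} \lcontr(f;\tau,M) - \log M + \log|\Z| = \expectunder{\z\sim p(\z)}{H(p(\cdot|\z), \qh(\cdot|\z))},
\]
and the two subtracted terms $\log M$ and $\log|\Z|$ are independent of $f$. Hence an $f$ that minimizes the limiting contrastive loss is exactly an $f$ whose associated $h = f\circ g$ minimizes the expected cross-entropy between the vMF conditional $p(\cdot|\z)$ and the model conditional $\qh(\cdot|\z)$ of the exponential-dot-product form in Eq.~\eqref{eq:extended_qhjoint}. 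So it suffices to characterize minimizers $h$ of this expected cross-entropy.

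Second, I would invoke Proposition~\ref{prop:extended_correct_model_ce_isometry}, whose hypotheses are exactly the present ones: $\Z = \Sp^{N-1}$, the ground-truth conditional is vMF, $h$ maps onto the sphere of radius $\sqrt{\tau\kappa}$ (realized by the learnable rescaling at the last layer of $f$), and by assumption the hypothesis class for $h$ is flexible enough that $p$ and $\qh$ can match (in particular $h(\z)=\sqrt{\tau\kappa}\,\z$ is realizable). Since the cross-entropy between two correctly specified distributions on the same support is globally minimized precisely when they coincide, the minimizer satisfies $p(\tzz|\z)=\qh(\tzz|\z)$ for all $\z,\tzz$; evaluating at $\tzz=\z$ forces the normalization constants to agree, and taking logarithms then yields the isometry property $\kappa\,\z^\top\tzz = h(\z)^\top h(\tzz)$ for all $\z,\tzz\in\Z$.

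Third, I would apply Proposition~\ref{prop:extended_mazurulamspheres} (after normalizing $h$ back to the unit sphere so it has the form assumed there): a map that preserves the dot product on $\Sp^{N-1}$ up to a positive constant is an orthogonal linear transformation, so $h$ is linear (orthogonal up to a fixed positive scale, which is the $\sqrt{\tau\kappa}$ bookkeeping factor). Finally, since $g$ is differentiable and injective it is invertible on its image $\X=g(\Z)$, and on $\X$ we have $f = h\circ g^{-1} = \mathbf{A}\,g^{-1}$ with $\mathbf{A}$ orthogonal (up to scale); this is the claimed recovery of the latent sources up to orthogonal linear transformations.

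The main obstacle is not in the chaining — which is essentially mechanical once the three prior results are in hand — but in the \emph{well-specification / realizability} step feeding Proposition~\ref{prop:extended_correct_model_ce_isometry}: one has to be precise that the global minimum of the expected cross-entropy is attained exactly on the set where $p=\qh$, i.e.\ that the hypothesis class contains a function realizing this match and that no degenerate competitor (e.g.\ one with mismatched support, for which the cross-entropy is $+\infty$) can do better. A secondary point worth stating explicitly is that "$f$ minimizes the CL loss for $M\to\infty$" is to be read as $f$ minimizing the limiting objective on the right-hand side of the display above, so that no interchange of $\lim_{M}$ and $\argmin$ is required; and the overall positive scalar from the radius $\sqrt{\tau\kappa}$ should be tracked so that the final statement reads as orthogonality up to scale, consistent with the last-layer rescaling.
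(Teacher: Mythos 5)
Your proposal is correct and follows essentially the same route as the paper's own proof, which chains Theorem~\ref{thm:extended_asym_inf_negatives_CE}, Proposition~\ref{prop:extended_correct_model_ce_isometry}, and Proposition~\ref{prop:extended_mazurulamspheres} in exactly this order. Your additional care about realizability of the cross-entropy minimum and the $\sqrt{\tau\kappa}$ rescaling is a welcome tightening of details the paper leaves implicit, but it does not change the argument.
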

    \begin{proof}\vspace{-1.8\topsep}
        As $f$ minimzes the contrastive loss $\lcontr$ we can apply Theorem~\ref{thm:asym_inf_negatives_CE} to see that $f$ also minimizes the cross-entropy between  $p(\tzz | \zz)$ and $\qh(\tzz | \zz)$ for any point $\zz$ on $\Z$. This means, we can apply Proposition~\ref{prop:correct_model_ce_isometry} to show that the concatenation $h = f \circ g$ is an isometry with respect to the dot product. Finally, according to Proposition~\ref{prop:mazurulamspheres}, $h$ must then be a composition of an orthogonal linear transformation and a constant scaling factor. Thus, $f$ recovers the latent sources up to orthogonal linear transformations, concluding the proof.
    \end{proof}
    
\subsection{Extension of theory to subspaces of \texorpdfstring{$\mathbb{R}^N$}{RN}} \label{apx:rn_extension}
    Here, we show how one can generalize the theory above from $\Z = \Sp^{N-1}$ to $\Z \subseteq \mathbb{R}^N$. Under mild assumptions regarding the ground-truth conditional distribution $p$ and the model distribution $\qh$, we prove that all minimizers of the cross-entropy between $p$ and $\qh$ are linear functions, if $\Z$ is a convex body. Note that the hyperrectangle $[a_1, b_1] \times \ldots \times [a_N, b_N]$ is an example of such a convex body.
    
\subsubsection{Assumptions} \label{apx:rn_assumptions}
    First, we restate the core assumptions for this proof. The main difference to the assumptions for the hyperspherical case above is that we assume different conditional distributions: instead of rotation-invariant von Mises-Fisher distributions, we use translation-invariant distributions (up to restrictions determined by the finite size of the space) of the exponential family.
    
\paragraph{Generative process} \label{apx:rn_gt_assumptions}
    Let $g : \Z \to \X$ be an injective function between the two spaces $\Z \subseteq\mathbb{R}^N$ and $\X \subseteq \R^K$ with $K\geq N$ and where $\Z$ is a convex body (e.g., a hyperrectangle). Further, let the marginal distribution be uniform, i.e., $p(\z) = |\Z|^{-1}$. 
    We assume that the conditional distribution over positive pairs $p(\tzz | \z)$ is an exponential distribution
    \begin{equations}
        p(\tzz|\z) &= C_p^{-1}(\z) e^{-\lambda \delta(\tzz, \zz)} \\\text{with} \quad C_p(\z) :&= \int e^{-\lambda \delta(\z,\tzz)} \,\d\tzz ,
    \end{equations}
    where $\lambda > 0$ a parameter controlling the width of the distribution and $\delta$ is a (semi-)metric. If $\delta$ is a semi-metric, i.e., it does not fulfill the triangle inequality, there must exist a metric $\delta'$ such that $\delta$ can be written as the composition of a continuously invertible map $j:\R_{\geq0} \to \R_{\geq0}$ with $j(0) = 0$ and the metric, i.e., $\delta = j \circ \delta'$.
    Finally, we assume that during training one has access to samples from both of these distributions.
    
    Note that unlike for the hypersphere, when sampling positive pairs $\zz, \tzz \sim p(\zz)p(\tzz | \zz)$, it is no longer guaranteed that the marginal distributions of $\zz$ and $\tzz$ are the same. When referencing the density functions -- or using them in expectation values -- $p(\cdot)$ will always denote the same marginal density, no matter if the argument is $\zz$ or $\tzz$.
    Specifically, $p(\tzz)$ does not refer to $\int p(\zz) p(\tzz | \zz) d\zz$.

\paragraph{Model} \label{apx:rn_model_assumptions}
    Let $\Z'$ be a subset of $\mathbb{R}^N$ that is a convex body and let $f : \X \to \Z'$ be the model whose parameters are optimized. We associate a conditional distribution $\qh(\tzz|\zz)$ with our model $f$ through
    \begin{equations} \label{eq:rn_qhjoint}
        \qh(\tzz|\z) &= C_q^{-1}(\zz)e^{-\delta(h(\tzz), h(\zz))/\tau}\quad \\ \text{with} \quad C_q(\zz) :&= \int e^{-\delta(h(\tzz), h(\zz))/\tau} \,\d\tzz,
    \end{equations}
    where $C_q(\zz)$ is the partition function and $\delta$ is defined above.
    
\subsubsection{Minimizing the cross-entropy}\label{sec:upper_bound_cross_entropy}
    In a first step, we show the analogue of Proposition~\ref{prop:asym_inf_negatives} for $\Z$ being a convex body:
    \vspace{-\topsep}
    \begin{proposition} \label{lem:delta_contrastive_asym_inf_negatives}
        For fixed $\tau >0$, as the number of negative samples $M \to \infty$, the $\ldeltacontr$ loss converges to
        \begin{equations}
            \lim_{M \to \infty} \ldeltacontr(f; \tau, M) -\log M =  \\ 
            \quad\quad \ldeltaalign(f; \tau) + \ldeltaunif(f; \tau), 
        \end{equations}
        where
        \begin{equations}
             \ldeltaalign(f; \tau) &:=  \frac{1}{\tau} \expectunder{\substack{\z \sim p(\z) \\ \tzz \sim p(\tzz | \z)}}{\delta(h(\tzz), h(\z)))} \\
             \ldeltaunif(f;\tau) &:= \expectunder{\z \sim p(\z)}{\log\left( \expectunder{\tzz \sim p(\tzz)}{e^{-\delta(h(\tzz), h(\z))/\tau}}\right)},
        \end{equations}
        and $\ldeltacontr(f; \tau, M)$ is as defined in Eq.~\eqref{eq:delta_contrastive_loss}.
    \end{proposition}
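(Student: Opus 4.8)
\begin{proofsketch}
The plan is to imitate the proof of Proposition~\ref{prop:asym_inf_negatives} (the hyperspherical asymptotics of \citet{wang2020understanding}), systematically replacing the dot product $f(\x)^\top f(\tx)$ by the negative similarity $-\delta(f(\x),f(\tx))$, and to supply the single extra ingredient this substitution requires: a uniform bound on $\delta$. Writing $h = f\circ g$ and pulling the negative-sample expectation back through the injective generator $g$ (the latents of $p_\mathsf{data}$ are distributed according to the uniform marginal $p$, since $\x = g(\z)$ with $\z\sim p$), then splitting the logarithm and subtracting $\log M$, gives
\begin{align}
\ldeltacontr(f;\tau,M)-\log M
&= \frac{1}{\tau}\,\expectunder{\substack{\z\sim p(\z)\\\tzz\sim p(\tzz|\z)}}{\delta(h(\tzz),h(\z))} \nonumber\\
&\quad + \expectunder{\substack{\z\sim p(\z)\\\tzz\sim p(\tzz|\z)}}{\log\!\Bigl(\tfrac{1}{M}e^{-\delta(h(\z),h(\tzz))/\tau}+\tfrac{1}{M}\textstyle\sum_{i=1}^M e^{-\delta(h(\tzz_i^-),h(\tzz))/\tau}\Bigr)},
\end{align}
where $\tzz_1^-,\dots,\tzz_M^-$ are i.i.d.\ from $p$. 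The first term is already $\ldeltaalign(f;\tau)$ and is independent of $M$ (here I use symmetry of $\delta$), so everything reduces to showing the second term converges to $\ldeltaunif(f;\tau)$.

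For the second term I would argue pointwise in the anchor $\tzz$ and then take the outer expectation. Since $\Z'$ is a convex body it is bounded, and $\delta$ is induced by a norm (or equals $j\circ\delta'$ with $\delta'$ norm-induced and $j$ continuous with $j(0)=0$), so $\delta\le D<\infty$ on $\Z'\times\Z'$ and hence $e^{-\delta/\tau}\in[e^{-D/\tau},1]$. By the strong law of large numbers the empirical average $\tfrac{1}{M}\sum_i e^{-\delta(h(\tzz_i^-),h(\tzz))/\tau}$ converges almost surely to $\expectunder{\tzz'\sim p}{e^{-\delta(h(\tzz'),h(\tzz))/\tau}}$, while the $\tfrac{1}{M}e^{-\delta(h(\z),h(\tzz))/\tau}$ term vanishes; by continuity of $\log$ the integrand converges a.s.\ to $\log\expectunder{\tzz'\sim p}{e^{-\delta(h(\tzz'),h(\tzz))/\tau}}$. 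Because the integrand is uniformly bounded in $M$ (it lies in $[-D/\tau,\log 2]$), dominated convergence lets me exchange limit and expectation; taking the outer expectation over the anchor then reproduces $\ldeltaunif(f;\tau)=\expectunder{\z\sim p(\z)}{\log\expectunder{\tzz\sim p(\tzz)}{e^{-\delta(h(\tzz),h(\z))/\tau}}}$, which is the claimed limit.

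The only genuine obstacle is this interchange of limit and expectation: on the hypersphere the similarity is automatically confined to $[-1/\tau,1/\tau]$, whereas here one must explicitly invoke compactness of the convex body $\Z'$ together with $\delta$ being norm-induced to secure the uniform bound $0\le\delta\le D$ that makes dominated convergence applicable --- which is precisely why the codomain of $f$ is taken to be a convex body and $\delta$ a norm-induced (semi-)metric. The remaining ingredients --- the change of variables $\x=g(\z)$, where injectivity of $g$ and the uniform marginal make sampling a negative from $p_\mathsf{data}$ equivalent to sampling its latent from $p$ (this is exactly where the uniform-marginal assumption enters), and the symmetry of $\delta$ --- are routine bookkeeping.
\end{proofsketch}
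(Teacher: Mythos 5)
Your proposal is correct and follows essentially the same route as the paper's proof: split off the alignment term, apply the law of large numbers (with the continuous mapping theorem) to get the pointwise almost-sure limit of the log term, and interchange limit and expectation via dominated convergence using boundedness of $e^{-\delta/\tau}$. Your explicit lower bound $\delta\le D$ from compactness of the convex body is a slightly more careful justification of the domination step than the paper's (which only cites the upper bound $e^{-\delta}\le 1$), but it is the same argument; note only that the uniform-marginal assumption is not actually needed for this proposition — it enters later when relating the limit to the cross-entropy.
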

    \begin{proof}\vspace{1.8\topsep}
        This proof is adapted from \citet{wang2020understanding}. By the Continuous Mapping Theorem and the law of large numbers, for any $\x, \txx$ and $\{\, \x^-_i \,\}_{i=1}^M$ it follows almost surely 
        \begin{equations}
            \lim_{M \to \infty} &\log \bigg( \frac{1}{M}e^{-\delta(f(\x), f(\txx))/\tau)} + \\ 
            &\quad\quad \frac{1}{M}\sum_{i=1}^M e^{-\delta(f(\x), f(\x^-_i))/\tau} \bigg) \\
            &= \log \left( \expectunder{\x^- \sim p_\mathsf{data}}{e^{-\delta(f(\x), f(\x^-))/\tau}} \right) \\
            &= \log \left( \expectunder{\tzz \sim p(\tzz)}{e^{-\delta(h(\z), h(\tzz))/\tau}} \right),
        \end{equations}
        where in the last step we expressed the sample $\x$ and negative examples $\x^-$ in terms of their latent factors.
        
        We can now express the limit of the entire loss function as
        \begin{equations}
            &\lim_{M\to \infty} \ldeltacontr(f; \tau, M) - \log M \\
            &= \frac{1}{\tau} \expectunder{(\x, \txx) \sim p_\mathsf{pos}}{\delta(f(\x), f(\txx))} \\
            &\quad + \lim_{M\to \infty} \underset{\substack{
                (\x, \tx) \sim p_\mathsf{pos} \\
                \{\xx^-_i\}_{i=1}^M \overset{\text{i.i.d.}}{\sim} p_\mathsf{data}
            }}{\mathbb{E}} \Bigg[ \log \bigg( \frac{1}{M}e^{-\delta(f(\x), f(\txx))/\tau} \\
            &\mkern150mu + \frac{1}{M}\sum_{i=1}^M e^{-\delta(f(\x), f(\x^-))/\tau} \bigg) \Bigg] \\
            &= \frac{1}{\tau}\expectunder{(\x, \txx) \sim p_\mathsf{pos}}{\delta(f(\x), f(\txx))} \\
            &\quad + \underset{\substack{
                (\x, \tx) \sim p_\mathsf{pos} \\
                \{\xx^-_i\}_{i=1}^M \overset{\text{i.i.d.}}{\sim} p_\mathsf{data}
            }}{\mathbb{E}} \Bigg[ \lim_{M\to \infty} \log \bigg( \frac{1}{M}e^{-\delta(f(\x), f(\txx))/\tau} \\
            &\mkern150mu + \frac{1}{M}\sum_{i=1}^M e^{-\delta(f(\x), f(\x^-_i))/\tau} \bigg) \Bigg].
        \end{equations}
        Note that as $\delta$ is a (semi-)metric, the expression $e^{-\delta(f(\x), f(\txx))}$ is upper-bounded by $1$. Hence, according to the Dominated Convergence Theorem one can switch the limit with the expectation value in the second step. 
        Inserting the previous results yields
        \begin{equations}
            &= \frac{1}{\tau}\expectunder{(\x, \txx) \sim p_\mathsf{pos}}{\delta(f(\x), f(\txx))} \\
            &\quad + \expectunder{\x \sim p_\mathsf{data}}{\log \left( \expectunder{\x^- \sim p_\mathsf{data}}{e^{-\delta(f(\x), f(\x^-))/\tau}} \right)} \\
            &= \frac{1}{\tau}\expectunder{\substack{\z \sim p(\z) \\ \tzz \sim p(\tzz | \z)}}{\delta(h(\z), h(\tzz))} \\
            &\quad + \expectunder{\z \sim p(\z)}{\log \left( \expectunder{\tzz \sim p(\tzz)}{e^{-\delta(h(\z), h(\tzz))/\tau}} \right)} \\
            &= \ldeltaalign(f; \tau) + \ldeltaunif(f; \tau).
        \end{equations}
    \end{proof}
    
    Next, we derive a property similar to Theorem~\ref{thm:asym_inf_negatives_CE}, which suggests a practical method to find minimizers of the cross-entropy between the ground-truth $p$ and model conditional $\qh$. This property is based on our previously introduced objective function in Eq.~(\ref{eq:delta_contrastive_loss}), which is a modified version of the InfoNCE objective in Eq.~(\ref{eq:contrastive_loss}).
    
    \vspace{\topsep}
    \begin{theorem} \label{thm:rn_delta_contrastive_CE}
        Let $\delta$ be a semi-metric and $\tau, \lambda > 0$ and let the ground-truth marginal distribution $p$ be uniform. Consider a ground-truth conditional distribution $p(\tzz | \zz) = C_p^{-1}(\z) \exp(-\lambda \delta(\tzz, \zz))$ and the model conditional distribution
        \begin{equations} \label{eq:extended_rn_qhjoint}
            \qh(\tzz | \zz) &= C_h^{-1}(\zz) e^{-\delta(h(\tzz), h(\zz))/\tau} \\
            \text{with} \quad C_h(\z) :&= \int_\Z e^{-\delta(h(\tzz), h(\zz))/\tau} \d\tzz.
        \end{equations}
        Then the cross-entropy between $p$ and $\qh$ is given by 
        \begin{equations}
            \lim_{M \to \infty} \ldeltacontr(f; \tau, M) - \log M + \log |\Z| = \\ \expectunder{\z \sim p(\z)}{H(p(\cdot | \z), \qh(\cdot | \z)}, 
        \end{equations}
        which can be implemented by sampling data from the accessible distributions.
    \end{theorem}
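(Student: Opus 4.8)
The plan is to follow the template of Theorem~\ref{thm:extended_asym_inf_negatives_CE} essentially verbatim, replacing the dot-product kernel $e^{\,h(\tzz)^\top h(\z)/\tau}$ by the exponential kernel $e^{-\delta(h(\tzz),h(\z))/\tau}$ and the von Mises--Fisher normalizer by $C_h(\z)$. First I would expand the cross-entropy from its definition, $\expectunder{\z\sim p(\z)}{H(p(\cdot|\z),\qh(\cdot|\z))} = \expectunder{\tzz,\z\sim p(\tzz,\z)}{-\log\qh(\tzz|\z)}$, and substitute the model conditional from Eq.~\eqref{eq:extended_rn_qhjoint}. This turns the cross-entropy into $\frac{1}{\tau}\,\expectunder{\tzz,\z\sim p(\tzz,\z)}{\delta(h(\tzz),h(\z))} + \expectunder{\z\sim p(\z)}{\log C_h(\z)}$, i.e.\ an alignment-like term plus an expected log-partition-function term.

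Next I would unfold $C_h(\z) = \int_\Z e^{-\delta(h(\tzz),h(\z))/\tau}\,\d\tzz$ and use the single structural hypothesis available --- that the ground-truth marginal is uniform, $p(\z) = |\Z|^{-1}$ --- to rewrite this Lebesgue integral as $|\Z|\cdot\expectunder{\tzz\sim p(\tzz)}{e^{-\delta(h(\tzz),h(\z))/\tau}}$. This is the only place uniformity enters: it lets Lebesgue integration over $\Z$ be traded for an expectation under the marginal, up to the volume factor $|\Z|$. Pulling the constant $\log|\Z|$ outside the logarithm and substituting $h=f\circ g$ to express everything in terms of $f$ and the observations, the cross-entropy becomes exactly $\ldeltaalign(f;\tau) + \ldeltaunif(f;\tau) + \log|\Z|$ with $\ldeltaalign,\ldeltaunif$ as in Proposition~\ref{lem:delta_contrastive_asym_inf_negatives}. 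Invoking that proposition, which equates $\ldeltaalign(f;\tau)+\ldeltaunif(f;\tau)$ with $\lim_{M\to\infty}\ldeltacontr(f;\tau,M)-\log M$, gives the stated identity; the closing remark about being implementable by sampling is immediate, since $\ldeltacontr$ is written as an expectation over the accessible $p_\mathsf{pos}$ and $p_\mathsf{data}$.

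Two points require a remark but no new idea. First, one should check that $C_h(\z)$ is finite and strictly positive, so that $\log C_h(\z)$ and the substitution above are legitimate: finiteness follows from $\delta\ge 0 \Rightarrow e^{-\delta/\tau}\le 1$ together with $\Z$ being a convex body (hence of finite, nonzero Lebesgue measure) and $h=f\circ g$ mapping into $\Z$; positivity follows from continuity of $h$ and $\delta$. Second, nothing in the derivation uses the triangle inequality, so the semi-metric case is covered without change --- the only properties of $\delta$ actually used are nonnegativity (for integrability and the dominated-convergence step inherited through Proposition~\ref{lem:delta_contrastive_asym_inf_negatives}) and that the \emph{same} $\delta$ appears in $p$ and in $\qh$. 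I expect the main (and essentially only) obstacle to be bookkeeping: keeping the argument order of $\delta$ consistent between the ground-truth conditional $p(\tzz|\z)\propto e^{-\lambda\delta(\tzz,\z)}$, the model conditional, and the form of $\ldeltaalign$ in Proposition~\ref{lem:delta_contrastive_asym_inf_negatives}, and disentangling the expectation over positive pairs $p(\tzz,\z)$ from the marginal $p(\z)$ correctly when the integral is converted to an expectation. Since $\delta$ is symmetric these are harmless, but they are the steps most prone to a sign or index slip.
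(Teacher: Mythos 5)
Your proposal follows the paper's proof essentially verbatim: expand the cross-entropy, substitute $\qh$, split into an alignment term plus an expected log-partition term, use the uniform marginal to trade the integral defining $C_h(\z)$ for $|\Z|$ times an expectation under $p(\tzz)$, identify $\ldeltaalign+\ldeltaunif+\log|\Z|$, and close with Proposition~\ref{lem:delta_contrastive_asym_inf_negatives}. The extra remarks on finiteness and positivity of $C_h$ and on the triangle inequality being unnecessary are correct but do not change the argument.
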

    \begin{proof}\vspace{-1.8\topsep}
        We use the definition of the cross-entropy to write
        \begin{align}
            &\expectunder{\z \sim p(\z)}{H(p(\cdot | \z), \qh(\cdot | \z)}\\
            &= -\expectunder{\z \sim p(\z)}{ \expectunder{\tzz \sim p(\tzz | \z)} {\log(\qh(\tzz | \z))}}.
            \intertext{We insert the definition of $\qh$ and get}
            &= -\expectunder{\z \sim p(\z)}{ \expectunder{\tzz \sim p(\tzz | \z)} {\log(C_h^{-1}(\z)) - \frac{1}{\tau}\delta(h(\tzz), h(\z)))}}\\
            &= \expectunder{\z \sim p(\z)}{ \expectunder{\tzz \sim p(\tzz | \z)} {\log(C_h(\z)) + \frac{1}{\tau}\delta(h(\tzz), h(\z)))}}.\\
            \intertext{As $C_h(\z)$ does not depend on $\tzz$ it can be moved out of the inner expectation value, yielding}
            &= \expectunder{\z \sim p(\z)}{ \frac{1}{\tau}\expectunder{\tzz \sim p(\tzz | \z)}{\delta(h(\tzz), h(\z)))} + \log(C_h(\z))},\\
            \intertext{which can be written as}
            &= \frac{1}{\tau} \expectunder{\substack{\z \sim p(\z) \\ \tzz \sim p(\tzz | \z)}}{\delta(h(\tzz), h(\z)))} + \expectunder{\z \sim p(\z)}{\log(C_h(\z))}.\\
            \intertext{Inserting the definition of $C_h$ gives}
            &= \frac{1}{\tau}\expectunder{\substack{\z \sim p(\z) \\ \tzz \sim p(\tzz | \z)}}{\delta(h(\tzz), h(\z)))} \\
            &\quad\quad +\expectunder{\z \sim p(\z)}{\log \left(\int e^{-\delta(h(\tzz), h(\z))/\tau}d\tzz\right)}.\\
            \intertext{Next, the second term can be expanded by $1=|\Z||\Z|^{-1}$, yielding}
            &= \frac{1}{\tau}\expectunder{\substack{\z \sim p(\z) \\ \tzz \sim p(\tzz | \z)}}{\delta(h(\tzz), h(\z)))} \\
            &\quad\quad +\expectunder{\z \sim p(\z)}{\log \left(\int \frac{|\Z|}{|\Z|} e^{-\delta(h(\tzz), h(\z))/\tau}d\tzz\right)}.\\
            \intertext{Finally, by using that the marginal is uniform, i.e., $p(\z) = |\Z|^{-1}$, this can be simplified as}
            &= \frac{1}{\tau}\expectunder{\substack{\z \sim p(\z) \\ \tzz \sim p(\tzz | \z)}}{\delta(h(\tzz), h(\z)))} \\
            &\quad\quad +\expectunder{\z \sim p(\z)}{\log\left( \expectunder{\tzz \sim p(\tzz)}{e^{-\delta(h(\tzz), h(\z))/\tau}}\right)} \\
            &\quad\quad + \log |\Z| \\
            &= \lim_{M\to \infty} \ldeltacontr(f; \tau, M) - \log M + \log p|\Z|.
        \end{align}
    \end{proof}
    
\subsubsection{Cross-entropy minimizers are isometries}
    Now we show a version of Proposition~\ref{prop:correct_model_ce_isometry}, that is generalized from hyperspherical spaces to (subsets of) $\mathbb{R}^N$.

    \vspace{\topsep}
    \begin{proposition}[Minimizers of the cross-entropy are isometries]\label{prop:rn_ce_minimzers_isometries}
        Let $\delta$ be a semi-metric. Consider the conditional distributions of the form $p(\tzz | \zz) = C_p^{-1}(\z) \exp(-\delta(\tzz, \zz)/\lambda)$ and
        \begin{equations}
            \qh(\tzz | \zz) &= C_h^{-1}(\zz) e^{-\delta(h(\tzz), h(\zz))/\tau} \\
            \text{with} \quad C_h(\z) :&= \int_\Z e^{-\delta(h(\tzz), h(\zz))/\tau} \d\tzz,
        \end{equations}
        where the hypothesis class for $h$ is assumed to be sufficiently flexible such that $p(\tzz | \zz)$ and $\qh(\tzz|\zz)$ can match for any point $\zz$.
        If $h$ is a minimizer of the cross-entropy $\lce = \E_{p(\tzz | \zz)}[- \log \qh(\tzz | \zz)]$, then $h$ is an isometry, i.e.,
        $\forall \zz, \tzz \in \Z: \lambda\tau \delta(\z, \tzz) = \delta(h(\z), h(\tzz))$. 
    \end{proposition}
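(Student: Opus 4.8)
The plan is to mirror Proposition~\ref{prop:extended_correct_model_ce_isometry}, swapping the dot-product bookkeeping for bookkeeping with the (semi-)metric $\delta$. There are three moves: (i) upgrade cross-entropy minimization to exact matching of the conditionals $p(\cdot\mid\z)$ and $\qh(\cdot\mid\z)$; (ii) take logarithms and isolate the only term that depends on $\z$ alone, namely the log-ratio of partition functions; (iii) evaluate on the diagonal $\tzz=\z$ to annihilate that term and read off the isometry.

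For move (i), decompose, for each fixed $\z$, $H(p(\cdot\mid\z),\qh(\cdot\mid\z)) = H(p(\cdot\mid\z)) + \KL\!\big(p(\cdot\mid\z)\,\|\,\qh(\cdot\mid\z)\big)$, so that
\begin{equation}
  \lce = \expectunder{\z\sim p(\z)}{H(p(\cdot\mid\z))} + \expectunder{\z\sim p(\z)}{\KL\!\big(p(\cdot\mid\z)\,\|\,\qh(\cdot\mid\z)\big)}.
\end{equation}
The first summand is independent of $h$ and the second is nonnegative, so it suffices to show the second can be made $0$ within the hypothesis class: by the flexibility assumption there is an admissible $h$ with $\qh=p$ pointwise — concretely, any $h$ satisfying $\delta(h(\z),h(\tzz)) = \lambda\tau\,\delta(\z,\tzz)$, which for $\delta$ induced by a norm is just a rescaled identity and sends the convex body $\Z$ to another convex body $\Z'$, as permitted by the model assumptions (Appx.~\ref{apx:rn_model_assumptions}). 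Hence every minimizer $h$ makes the second summand vanish, i.e.\ $\qh(\cdot\mid\z)=p(\cdot\mid\z)$ for $p$-almost every $\z$; since $h$, $\delta$ and the partition functions are continuous, both conditionals are jointly continuous in $(\z,\tzz)$, so this identity in fact holds for \emph{all} $\z,\tzz\in\Z$.

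For moves (ii)--(iii), write the equality of densities as $C_p^{-1}(\z)\,e^{-\lambda\,\delta(\tzz,\z)} = C_h^{-1}(\z)\,e^{-\delta(h(\tzz),h(\z))/\tau}$ (using the width convention $p(\tzz\mid\z)\propto e^{-\lambda\,\delta(\tzz,\z)}$) and take logarithms to obtain, for all $\z,\tzz$,
\begin{equation}
  \tfrac{1}{\tau}\,\delta(h(\tzz),h(\z)) - \lambda\,\delta(\tzz,\z) = \log C_h(\z) - \log C_p(\z) =: \psi(\z),
\end{equation}
where the right-hand side depends on $\z$ only. Setting $\tzz=\z$ and using that $\delta$ is a semi-metric, so $\delta(\z,\z)=0$ and $\delta(h(\z),h(\z))=0$ (also when $\delta=j\circ\delta'$, since $j(0)=0$), forces $\psi\equiv 0$; substituting back gives $\delta(h(\z),h(\tzz)) = \lambda\tau\,\delta(\z,\tzz)$ for all $\z,\tzz\in\Z$, the claimed isometry up to the global scale. (One also gets $C_h\equiv C_p$ as a byproduct, but this is not needed.)

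The main obstacle, exactly as on the hypersphere, is move (i): a minimizer forces $\KL=0$ --- not merely ``small'' --- only because the infimum of $\lce$ is genuinely attained inside the hypothesis class, which is what the flexibility hypothesis buys. This should be spelled out by exhibiting the rescaled-identity witness and checking that it respects the domain/codomain constraints on $h$ (both $\Z$ and $\Z'$ convex bodies in $\mathbb{R}^N$); the almost-everywhere-to-everywhere upgrade is then a routine continuity argument. Note that neither convexity of $\Z$ nor the triangle inequality is used at this stage --- those enter only in the subsequent step promoting a $\delta$-isometry to an affine map.
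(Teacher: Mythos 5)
Your proof is correct and follows essentially the same route as the paper's: upgrade cross-entropy minimization to pointwise matching of $p(\cdot\mid\z)$ and $\qh(\cdot\mid\z)$ (the paper invokes the same "global minimum of cross-entropy iff the distributions match" fact that your entropy-plus-KL decomposition makes explicit), then evaluate on the diagonal $\tzz=\z$ to equate the partition functions and read off the isometry. Your treatment is slightly more careful about exhibiting an attaining witness and about the almost-everywhere-to-everywhere upgrade, but these are refinements of the same argument rather than a different approach.
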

    \begin{proof}\vspace{-1.8\topsep}
        Note that $\qh(\tzz|\z)$ is powerful enough to match $p(\tzz|\zz)$ for the correct choice of $h$, e.g. the identity. The global minimum of cross-entropy between two distributions is reached if they match by value and have the same support.
        Hence, if $p$ is a regular density, $\qh$ will be a regular density, i.e., $\qh$ is continuous and has only finite values $0 \leq \qh < \infty$.
        As the two distributions match, this means 
        \begin{equation}
            p(\tzz|\zz) = \qh(\tzz|\zz).
        \end{equation}
        This expression also holds true for $\tzz = \z$; additionally using the property $\delta(\z, \z) = 0$ yields
        \begin{align}
            \hspace{0.6cm} && p(\zz|\zz) &= \qh(\zz|\zz) \\
            \Leftrightarrow && C_p^{-1}(\z) e^{-\delta(\zz, \zz)/\lambda} &= C_h^{-1}(\zz) e^{-\delta(h(\zz), h(\zz))/\tau} \\
            \Leftrightarrow && C_p(\z) &= C_h(\zz).
        \intertext{As the normalization constants are identical, we obtain for all $\zz, \tzz \in \Z$}
            \hspace{0.6cm} && e^{-\delta(\tzz, \zz)/\lambda} &= e^{-\delta(h^*(\tzz), h^*(\zz))/\tau}\\ \Leftrightarrow && \delta(\tzz, \zz) &= \frac{\lambda}{\tau} \delta(h^*(\tzz),h^*(\zz)).
        \end{align}
        By introducing a new semi-metric $\delta' := \lambda\tau^{-1} \delta$, we can write this as $\delta(\tzz, \zz) = \delta'(h(\tzz),h(\zz))$, which shows that $h$ is an isometry. If there is no model mismatch, i.e., $\lambda = \tau$, this means $\delta(\z, \tzz) = \delta(h(\z), h(\tzz))$.
    \end{proof}
    Note, that this result does not depend on the choice of $\Z$ but just on the class of conditional distributions allowed.

\subsubsection{Cross-entropy minimization identifies the ground-truth factors} \label{apx:rn_ce_min_identifiability}

    Before we continue, let us recall a Theorem by \citet{mankiewicz1972extension}:
    \vspace{\topsep}
    \begin{customtheorem}{C}[\citealp{mankiewicz1972extension}]\label{thm:mankiewicz}
        Let $\mathcal{X}$ and $\mathcal{Y}$ be normed linear spaces and let $\mathcal{V}$ be a convex body in $\mathcal{X}$ and $\mathcal{W}$ a convex body in $\mathcal{Y}$. Then every surjective isometry between $\mathcal{V}$ and $\mathcal{W}$ can be uniquely extended to an affine isometry between $\mathcal{X}$ and $\mathcal{Y}$.
    \end{customtheorem}
    \begin{proof}\vspace{-1.8\topsep}
        See \citet{mankiewicz1972extension}.
    \end{proof}
    
    In addition, it is known that isometries on closed spaces are bijective:
     \begin{customlemma}{A} \label{lem:compact_space_isometry_bijective}
        Assume $h$ is an isometry of the closed space $\Z$ into itself, i.e., $\forall \z, \tzz: \delta(\z, \tzz) = \delta(h(\z), h(\tzz))$. Then $h$ is bijective.
    \end{customlemma}
    \begin{proof}
        See Lemma (2.6) in \citet{calka1982local} for surjectivity.
        We show the injectivity by contradiction. Assume $h$ is not injective. Then we can find a point $\tzz \neq \zz$ where $h(\zz) = h(\tzz)$. But then $\delta(\z, \tzz) > \delta(\z, \z)$ and $\delta(h(\zz), h(\tzz)) = \delta(h(\zz), h(\zz)) = 0$ by the properties of $\delta$. Hence, $h$ is injective.
    \end{proof}
    
    Before continuing, we need to generalize the class of functions we consider as distance measures:
    \begin{lemma}\label{lem:semimetric}
        Let $\delta'$ be a the composition of a continuously invertible function $j: \R_{\geq0} \to \R_{\geq0}$ with $j(0) = 0$ and a metric $\delta$, i.e., $\delta' := j \circ \delta$.
        Then,
        (i) $\delta'$ is a semi-metric and (ii) if a function $h: \R^n \to \R^n$ is an isometry of a space with the semi-metric $\delta'$, it is also an isometry of the space with the metric $\delta$.
    \end{lemma}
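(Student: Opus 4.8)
The plan is to establish both parts by directly unwinding the definitions; the only ingredient worth isolating beforehand is that a continuously invertible $j$ with $j(0)=0$ vanishes precisely at $0$ (and hence $j^{-1}(0)=0$).

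For part (i) I would verify the three defining properties of a semi-metric for $\delta' := j\circ\delta$. Non-negativity is immediate since $j$ takes values in $\R_{\geq0}$. Symmetry follows from symmetry of the metric $\delta$, as $\delta'(\z,\tzz)=j(\delta(\z,\tzz))=j(\delta(\tzz,\z))=\delta'(\tzz,\z)$. For the identity of indiscernibles, note that $j$ being continuously invertible is in particular injective, so together with $j(0)=0$ we get $j(t)=0\iff t=0$; hence $\delta'(\z,\tzz)=0\iff\delta(\z,\tzz)=0\iff\z=\tzz$, using that $\delta$ is a metric. No triangle inequality is asserted, which is exactly why $\delta'$ need only be a semi-metric and not a metric.

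For part (ii), assume $h$ is an isometry with respect to $\delta'$, i.e. $\delta'(h(\z),h(\tzz))=\delta'(\z,\tzz)$ for all $\z,\tzz$. By definition of $\delta'$ this reads $j\bigl(\delta(h(\z),h(\tzz))\bigr)=j\bigl(\delta(\z,\tzz)\bigr)$, and applying $j^{-1}$ (which exists by hypothesis) to both sides gives $\delta(h(\z),h(\tzz))=\delta(\z,\tzz)$, i.e. $h$ is an isometry with respect to $\delta$.

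I do not anticipate a genuine obstacle: the argument is a short chain of equalities. The single point that deserves a word of care is the implication $j(t)=0\Rightarrow t=0$ used in (i) and, symmetrically, the invertibility of $j$ used in (ii); both follow from "continuously invertible", which forces $j$ to be a strictly monotone bijection onto its image with $j^{-1}(0)=0$.
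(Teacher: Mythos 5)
Your proof is correct and follows essentially the same route as the paper's: part (i) by checking non-negativity, symmetry, and the identity of indiscernibles from the injectivity of $j$ together with $j(0)=0$, and part (ii) by applying $j^{-1}$ to both sides of the $\delta'$-isometry identity. The only cosmetic difference is that the paper phrases the key property of $j$ as strict monotonicity rather than injectivity, which amounts to the same thing here.
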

        \begin{proof}
            (i) Let $\z, \tzz \in \Z$. Per assumption $j$ must be strictly monotonically increasing on $\R_{\geq0}$. Since $\delta$ is a metric it follows $\delta(\z, \tzz) \geq 0 \Rightarrow \delta'(\z, \tzz) = j(\delta(\z, \tzz)) \geq 0$, with equality iff $\z = \tzz$. Furthermore, since $\delta$ is a metric it is symmetric in its arguments and, hence, $\delta'$ is symmetric in its arguments. Thus, $\delta'$ is a semi-metric.
            
            (ii) $h$ is an isometry of a space with the semi-metric $\delta'$, allowing to derive that for all $\z, \tzz \in \Z$,
            \begin{align}
                \delta'(h(\z), h(\tzz)) &= \delta'(\z, \tzz)\\
                j(\delta(h(\z), h(\tzz))) &= j(\delta(\z, \tzz))\\
                \intertext{and, applying the inverse $j^{-1}$ which exists by assumption, yields}
                \delta(h(\z), h(\tzz)) &= \delta(\z, \tzz),
            \end{align}
            concluding the proof.
        \end{proof}
    
    By combining the properties derived before we can show that $h$ is an affine function:
    \vspace{\topsep}
    \begin{theorem} \label{thm:ce_rn_linear_identifiable}
        Let $\Z = \Z'$ be a convex body in $\mathbb{R}^N$.
        Let the mixing function $g$ be differentiable and invertible. If the assumed form of $\qh$ as defined in Eq.~\eqref{eq:rn_qhjoint} matches that of $p$, and if $f$ is differentiable and minimizes the cross-entropy between $p$ and $\qh$, then we find that $h = f \circ g$ is affine, i.e., we recover the latent sources up to affine transformations.
    \end{theorem}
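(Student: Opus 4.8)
The plan is to chain together the structural results developed above. Since $f$ minimizes the cross-entropy between $p$ and $\qh$, and since the hypothesis class for $h$ is by assumption flexible enough that $p$ and $\qh$ can match, Proposition~\ref{prop:rn_ce_minimzers_isometries} applies and tells us that $h = f\circ g$ is an isometry of $\Z$ up to the constant factor relating the two temperatures $\lambda$ and $\tau$; equivalently, $h$ is an exact isometry for the rescaled semi-metric $\delta' := \lambda\tau^{-1}\delta$ (this rescaling only affects the target metric and not the eventual affine conclusion). If $\delta$ is genuinely a semi-metric rather than a metric --- i.e.\ $\delta = j\circ\delta''$ for a continuously invertible $j$ with $j(0)=0$ and a norm-induced metric $\delta''$ --- then Lemma~\ref{lem:semimetric}(ii) upgrades this to an isometry for the underlying norm-induced metric $\delta''$. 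Thus we may assume $h$ is an isometry of $\Z$ into itself with respect to a metric induced by a norm $\|\cdot\|$ on $\mathbb{R}^N$.

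Next I would establish bijectivity. A convex body in $\mathbb{R}^N$ is closed (indeed compact), so Lemma~\ref{lem:compact_space_isometry_bijective} applies to the isometry $h:\Z\to\Z$ and yields that $h$ is bijective. Consequently $h$ is a \emph{surjective} isometry between the convex body $\Z\subseteq\mathbb{R}^N$ and the convex body $\Z' = \Z\subseteq\mathbb{R}^N$, both regarded as subsets of the normed linear space $(\mathbb{R}^N,\|\cdot\|)$.

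Now the hypotheses of Mankiewicz's theorem (Theorem~\ref{thm:mankiewicz}) are in force, so the surjective isometry $h$ extends uniquely to an affine isometry of $\mathbb{R}^N$. Restricting this extension back to $\Z$ shows that $h$ itself is the restriction of an affine map, i.e.\ $h$ is affine on $\Z$. Since $g$ is invertible we have $f = h\circ g^{-1}$, and hence $f$ recovers the latent sources $\z$ from observations $g(\z)$ up to this affine transformation, as claimed.

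The point requiring the most care --- the main obstacle --- is ensuring that the geometry genuinely fits Mankiewicz's theorem: that the distance in play is a bona fide norm-induced metric on $\mathbb{R}^N$ (handled by the standing assumption on $\delta$ together with Lemma~\ref{lem:semimetric} to pass from a semi-metric to a norm-induced metric) and that $h$ is surjective onto $\Z'$ (handled by Lemma~\ref{lem:compact_space_isometry_bijective}, which only needs $\Z$ closed, true of a convex body). A secondary bookkeeping point is the constant $\lambda/\tau$ relating the ground-truth and model temperatures: it merely rescales the target metric, so $h$ need not be a metric isometry unless $\lambda=\tau$, but the affine conclusion is unaffected.
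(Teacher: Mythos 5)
Your proof is correct and follows essentially the same route as the paper's: Proposition~\ref{prop:rn_ce_minimzers_isometries} for the isometry property, Lemma~\ref{lem:semimetric} to pass from a semi-metric to a norm-induced metric, Lemma~\ref{lem:compact_space_isometry_bijective} for bijectivity on the closed convex body, and Theorem~\ref{thm:mankiewicz} for the affine conclusion. Your explicit handling of the $\lambda/\tau$ rescaling is, if anything, slightly more careful than the paper's own treatment.
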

    \begin{proof}\vspace{-1.8\topsep}
        According to Proposition~\ref{prop:rn_ce_minimzers_isometries} $h$ is an isometry and $\qh$ is a regular probability density function. If the distance $\delta$ used in the conditional distributions $p$ and $\qh$ is a semi-metric as in Lemma~\ref{lem:semimetric}, it follows that $h$ is also an isometry for a proper metric. This also means that $h$ is bijective according to Lemma~\ref{lem:compact_space_isometry_bijective}.
        Finally, Theorem~\ref{thm:mankiewicz} says that $h$ is an affine transformation.
    \end{proof}
    
    We use the assumption that the marginal $p(\z)$ is uniform, to show
    \vspace{\topsep}
    \begin{theorem} \label{thm:extended_rn_linear_identifiable}
        Let $\Z$ be a convex body in $\mathbb{R}^N$, $h = f\circ g:\Z\to\Z$, and $\delta$ be a metric or a semi-metric as defined in Lemma~\ref{lem:semimetric}. Further, let the ground-truth marginal distribution be uniform and the conditional distribution be as \eqref{eq:rn_conditional}. Let the mixing function $g$ be differentiable and injective. If the assumed form of $\qh$ matches that of $p$, i.e., 
        \begin{equations}
            \qh(\tzz|\z) &= C_q^{-1}(\zz)e^{-\delta(h(\tzz), h(\zz))/\tau}\quad \\ \text{with} \quad C_q(\zz) :&= \int e^{-\delta(h(\tzz), h(\zz))/\tau} \,\d\tzz,
        \end{equations}
        and if $f$ is differentiable and minimizes the $\ldeltacontr$ objective in \eqref{eq:delta_contrastive_loss} for $M \to \infty$, we find that $h = f \circ g$ is invertible and affine, i.e., we recover the latent sources up to affine transformations.
    \end{theorem}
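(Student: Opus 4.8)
The plan is to assemble this statement from three facts already established for the $\mathbb{R}^N$ setting, exactly paralleling the proof of Theorem~\ref{thm:extended_ident_matching} for the hypersphere: it is essentially Theorem~\ref{thm:rn_delta_contrastive_CE} followed by Theorem~\ref{thm:ce_rn_linear_identifiable}. First I would use Theorem~\ref{thm:rn_delta_contrastive_CE}: because the ground-truth marginal is uniform and the conditional has the assumed exponential form from \eqref{eq:rn_conditional} (absorbing any width parameter into $\delta$, which remains norm-induced), the asymptotic ($M\to\infty$) value of $\ldeltacontr(f;\tau,M)-\log M+\log|\Z|$ equals $\mathbb{E}_{\z\sim p(\z)}\!\left[H(p(\cdot|\z),\qh(\cdot|\z))\right]$. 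Hence a differentiable $f$ that minimizes the asymptotic $\ldeltacontr$ objective is exactly a minimizer of this expected cross-entropy.

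Next I would pass from the $\z$-averaged minimizer to a pointwise one. For every $\z$ the integrand $H(p(\cdot|\z),\qh(\cdot|\z))$ is bounded below by the differential entropy of $p(\cdot|\z)$, and by the realizability hypothesis the model class for $h$ is rich enough that $\qh(\cdot|\z)$ can equal $p(\cdot|\z)$ for all $\z$ simultaneously through a single $h$ (e.g.\ $h$ the identity, up to the allowed rescaling), so this lower bound is attained. Therefore a minimizer of the expectation must, for $p$-almost every $\z$ and hence, by continuity of $h$, for all $\z$, satisfy $\qh(\cdot|\z)=p(\cdot|\z)$ and in particular be a pointwise cross-entropy minimizer; Proposition~\ref{prop:rn_ce_minimzers_isometries} then yields that $h$ is an isometry of $(\Z,\delta)$ up to a global scalar, i.e.\ an exact isometry for a rescaled (semi-)metric $\delta'$.

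Finally I would invoke the geometric machinery, as in Theorem~\ref{thm:ce_rn_linear_identifiable}. If $\delta'$ is only a semi-metric it has the form $j\circ\tilde\delta$ with $\tilde\delta$ a genuine norm-induced metric, so Lemma~\ref{lem:semimetric} upgrades $h$ to an isometry of $(\Z,\tilde\delta)$; since $\Z$ is a convex body, hence closed, Lemma~\ref{lem:compact_space_isometry_bijective} makes $h:\Z\to\Z$ bijective, so it is a surjective isometry between two convex bodies of the normed space $(\mathbb{R}^N,\|\cdot\|)$, and Theorem~\ref{thm:mankiewicz} extends it uniquely to an affine isometry of $\mathbb{R}^N$. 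Restricted to $\Z$ this shows $h=f\circ g$ is affine and invertible, i.e.\ $f$ recovers the sources up to an affine transformation. I expect the only point needing genuine care to be the pointwise reduction in the second paragraph — interchanging ``$h$ minimizes the expected cross-entropy'' with ``$h$ minimizes it for every $\z$,'' which leans on simultaneous realizability of $\qh$ and a continuity/almost-everywhere argument — together with the routine bookkeeping of the scalar factors $\lambda,\tau$ and the convex-body rescaling so that the final isometry is stated for a bona fide norm-induced metric, which is exactly what Lemma~\ref{lem:semimetric} handles.
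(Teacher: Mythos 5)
Your proposal is correct and follows essentially the same route as the paper: the paper's proof is exactly the two-step composition you describe, invoking Theorem~\ref{thm:rn_delta_contrastive_CE} to identify the asymptotic $\ldeltacontr$ minimizer with the cross-entropy minimizer, and then Theorem~\ref{thm:ce_rn_linear_identifiable} (itself built from Proposition~\ref{prop:rn_ce_minimzers_isometries}, Lemma~\ref{lem:semimetric}, Lemma~\ref{lem:compact_space_isometry_bijective}, and Theorem~\ref{thm:mankiewicz}) to upgrade the resulting isometry to an invertible affine map. Your second paragraph on passing from the $\z$-averaged to the pointwise cross-entropy minimizer via the entropy lower bound and simultaneous realizability is a welcome refinement of a step the paper leaves implicit, but it does not change the structure of the argument.
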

    \begin{proof}\vspace{-1.8\topsep}
        According to Theorem~\ref{thm:rn_delta_contrastive_CE} $h$ minimizes the cross-entropy between $p$ and $\qh$ as defined in Eq.~\eqref{eq:qhjoint}. Then according to Theorem~\ref{thm:ce_rn_linear_identifiable}, $h$ is an affine transformation.
    \end{proof}
    This result can be seen as a generalized version of Theorem~\ref{thm:ident_matching}, as it is valid for any convex body $\Z \subseteq \mathbb{R}^N$ and allows a larger variety of conditional distributions. A missing step is to extend this theory beyond uniform marginal distributions. This will be addressed in future work.
    
    Under some assumptions we can further narrow down possible forms of $h$, thus, showing that $h$ in fact solves the nonlinear ICA problem only up to permutations and elementwise transformations.

    For this, let us first repeat a result from \citet{li1994isometries}, that shows an important property of isometric matrices:
    \vspace{\topsep}
    \begin{customtheorem}{D} \label{prop:lp_neq_2_isometry_permutation}
        Suppose $1 \leq \alpha \leq \infty$ and $\alpha \neq 2$. An $n \times n$ matrix $\A$ is an isometry of $L^\alpha$-norm if and only if $\A$ is a generalized permutation matrix, i.e., $\forall \z: (\A \z)_i = \alpha_i \z_{\sigma(i)}$, with $\alpha_i = \pm 2$ and $\sigma$ being a permutation.
    \end{customtheorem}
    \begin{proof}\vspace{-1.8\topsep}
        See \citet{li1994isometries}. Note that this can also be concluded from the Banach-Lamperti Theorem \citep{lamperti1958isometries}.
    \end{proof}
    
    Leveraging this insight, we can finally show:
    \vspace{\topsep}
    \begin{theorem} \label{thm:extended_rn_permutation_identifiable}
        Let $\Z$ be a convex body in $\mathbb{R}^N$, $h: \Z \to \Z$, and $\delta$ be an $L^\alpha$ metric for $\alpha \geq 1, \alpha \neq 2$ or the $\alpha$-th power of such an $L^\alpha$ metric. Further, let the ground-truth marginal distribution be uniform and the conditional distribution be as in Eq.~\eqref{eq:rn_conditional}, and let the mixing function $g$ be differentiable and invertible. If the assumed form of $\qh(\cdot|\z)$ matches that of $p(\cdot|\z)$, i.e., both use the same metric $\delta$ up to a constant scaling factor, and if $f$ is differentiable and minimizes the $\ldeltacontr$ objective in Eq.~\eqref{eq:delta_contrastive_loss} for $M \to \infty$ we find that $h = f \circ g$ is a composition of input independent permutations, sign flips and rescalings.
    \end{theorem}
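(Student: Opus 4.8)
The plan is to obtain the affine form of $h = f\circ g$ from the convex-body identifiability result already established, and then squeeze out the extra permutation/sign-flip rigidity from the special geometry of the $L^\alpha$ norm via the isometry classification in Theorem~\ref{prop:lp_neq_2_isometry_permutation}. As a first step, I would check that the hypotheses of Theorem~\ref{thm:extended_rn_linear_identifiable} are in force here: an $L^\alpha$ metric is induced by a norm, and the $\alpha$-th power of an $L^\alpha$ metric is, by Lemma~\ref{lem:semimetric} with $j(t)=t^\alpha$ (continuously invertible on $\mathbb{R}_{\geq0}$ with $j(0)=0$), an admissible semi-metric; the marginal is uniform, the conditional is as in Eq.~\eqref{eq:rn_conditional}, $\qh$ matches $p$, $g$ is differentiable and injective, and $f$ minimizes $\ldeltacontr$ of Eq.~\eqref{eq:delta_contrastive_loss} as $M\to\infty$. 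Theorem~\ref{thm:extended_rn_linear_identifiable} then yields that $h$ is invertible and affine, say $h(\z)=\mathbf{A}\z+\mathbf{b}$ with $\mathbf{A}\in\mathrm{GL}(N)$. This is the heavy lifting, and it is already done upstream: Theorem~\ref{thm:rn_delta_contrastive_CE} identifies $\ldeltacontr$ with the cross-entropy between $p$ and $\qh$ up to additive constants, Proposition~\ref{prop:rn_ce_minimzers_isometries} turns its minimizers into isometries, Lemma~\ref{lem:compact_space_isometry_bijective} makes those isometries bijective, and Mankiewicz's theorem (Theorem~\ref{thm:mankiewicz}) extends a surjective isometry of convex bodies to an affine isometry.

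Next I would recover the quantitative isometry statement that Theorem~\ref{thm:extended_rn_linear_identifiable} does not explicitly record. Applying Proposition~\ref{prop:rn_ce_minimzers_isometries} directly to the present $p$ and $\qh$ gives a positive constant $c'$ with $\delta(h(\z),h(\tzz)) = c'\,\delta(\z,\tzz)$ for all $\z,\tzz\in\Z$; since $\delta$ is either $\|\cdot\|_\alpha$ or its $\alpha$-th power, taking $\alpha$-th roots in the latter case produces a constant $c>0$ with $\|h(\z)-h(\tzz)\|_\alpha = c\,\|\z-\tzz\|_\alpha$ on $\Z\times\Z$. Substituting $h(\z)=\mathbf{A}\z+\mathbf{b}$ gives $\|\mathbf{A}(\z-\tzz)\|_\alpha = c\,\|\z-\tzz\|_\alpha$. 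Because $\Z$ is a convex body it is full-dimensional, so the set of differences $\z-\tzz$ contains an open neighborhood of the origin; by positive homogeneity of $\|\cdot\|_\alpha$ and of $\mathbf{v}\mapsto\mathbf{A}\mathbf{v}$ the relation $\|\mathbf{A}\mathbf{v}\|_\alpha = c\,\|\mathbf{v}\|_\alpha$ then extends to every $\mathbf{v}\in\mathbb{R}^N$, i.e., $c^{-1}\mathbf{A}$ is a linear isometry of $(\mathbb{R}^N,\|\cdot\|_\alpha)$.

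Finally, since $\alpha\geq 1$ and $\alpha\neq 2$, Theorem~\ref{prop:lp_neq_2_isometry_permutation} forces $c^{-1}\mathbf{A}$ to be a generalized permutation matrix, so $\mathbf{A}=c\,\mathbf{P}\mathbf{D}$ for a permutation matrix $\mathbf{P}$ and a diagonal matrix $\mathbf{D}$ with entries $\pm 1$. Hence $h(\z)=c\,\mathbf{P}\mathbf{D}\,\z+\mathbf{b}$: a coordinate permutation, coordinate-wise sign flips, a global rescaling, and a constant offset, all independent of the input, which is exactly the asserted form (the offset being immaterial for identifiability of the sources).

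The only work beyond assembling the cited results is bookkeeping: keeping the scale factors straight (the constants $c'$ and $c$, and the $\alpha$-th root that appears when $\delta$ is a power of the norm rather than the norm itself, where Lemma~\ref{lem:semimetric} is needed to pass back to the genuine metric), together with the short homogeneity argument that upgrades $\|\mathbf{A}\mathbf{v}\|_\alpha=c\|\mathbf{v}\|_\alpha$ from a neighborhood of the origin to all of $\mathbb{R}^N$ via full-dimensionality of $\Z$. The two genuinely nontrivial ingredients---Mankiewicz's extension theorem and the $L^\alpha$-isometry classification---are invoked as black boxes, so I do not anticipate a serious obstacle; the remaining points are elementary.
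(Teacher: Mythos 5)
Your proof is correct and follows essentially the same route as the paper's: affine invertibility from Theorem~\ref{thm:extended_rn_linear_identifiable}, the isometry property from Proposition~\ref{prop:rn_ce_minimzers_isometries}, and the $L^\alpha$-isometry classification of Theorem~\ref{prop:lp_neq_2_isometry_permutation}. You are in fact more careful than the paper at two points it glosses over --- passing from the $\alpha$-th power of the metric back to the metric itself via Lemma~\ref{lem:semimetric}, and the homogeneity argument that extends $\|\mathbf{A}\mathbf{v}\|_\alpha = c\,\|\mathbf{v}\|_\alpha$ from differences of points in the convex body to all of $\mathbb{R}^N$ so that the matrix-isometry theorem applies.
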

    \begin{proof}\vspace{-1.8\topsep}
        First, we prove the case where both conditional distributions use exactly the same metric. By Theorem~\ref{thm:extended_rn_linear_identifiable} $h$ is an affine transformation. Moreover, according to Proposition~\ref{prop:rn_ce_minimzers_isometries} is an isometry. Thus, by %
        Theorem~\ref{prop:lp_neq_2_isometry_permutation}, $h$ is a
        generalized permutation matrix, i.e., a composition of permutations and sign flips.
        
        Finally, for the case that $\delta$ matches the similarity measure in the ground-truth conditional distribution defined in Eq.~\eqref{eq:rn_conditional} (denoted as $\delta^*$) only up to a constant rescaling factor $r$, we know
        \begin{equations}
            &\mkern28mu \forall \z, \tzz: \delta^*(\z, \tzz) = \delta(h(\z), h(\tzz)) \\
            &\Leftrightarrow \delta^*(\z, \tzz) = \delta^*\left(\frac{1}{r}h(\z), \frac{1}{r}h(\tzz)\right).
        \end{equations}
        Thus, $\frac{1}{r}h$ is a $\delta^*$ isometry and the same argument as above holds, concluding the proof.
    \end{proof}

    \begin{table*}[htp]
        \centering
        \caption{Identifiability up to affine transformations on the training set of 3DIdent. Mean $\pm$ standard deviation over 3 random seeds. As earlier, only the first row corresponds to a setting that matches the theoretical assumptions for linear identifiability; the others show distinct violations. Supervised training with unbounded space achieves scores of $R^2=(99.98 \pm 0.01)$\% and $\text{MCC}=(99.99 \pm 0.01)$\%. The last row refers to using the SimCLR \citep{chen2020simple} augmentations to generate positive pairs. The last row refers to using the image augmentations suggested by \citet{chen2020simple} to generate positive image pairs; for details see Sec.~\ref{apx:experiment_details}.
        In contrast to Table~\ref{tab:3dident_results_test}, the scores here are reported on the same data the models were trained on.
        }
        \small
        \begin{tabular}{ccccccc}
            \toprule
            Dataset & \multicolumn{3}{c}{Model $f$} & Identity [\%] & \multicolumn{2}{c}{Unsupervised [\%]} \\
            $p(\cdot|\cdot)$ & Space & $\qh(\cdot|\cdot)$ & M. & $R^2$ & $R^2$ & MCC \\
            \midrule
            
            Normal & Box & Normal & \cmark & $5.35 \pm 0.72$  & $97.83 \pm 0.13$ & $98.85 \pm 0.07$\\            

            Normal & Unbounded & Normal & \xmark & \dittotikz & $97.72 \pm 0.02$ & $55.90 \pm 2.22$\\

            Laplace & Box & Normal & \xmark & \dittotikz & $97.95 \pm 0.05$ & $98.94 \pm 0.03$\\
            
            Normal & Sphere & vMF & \xmark & \dittotikz & $66.73 \pm 0.03$ & $42.72 \pm 3.20  $\\
            
            Augm. & Sphere & vMF & \xmark & \dittotikz & $45.94 \pm 1.80$ & $47.6 \pm 1.45$\\

            \bottomrule
        \end{tabular}
        \label{tab:3dident_results_train}
    \end{table*}

\subsection{Experimental details} \label{apx:experiment_details}
    For the experiments presented in Sec.~\ref{sec:toy_experiments} we train our feature encoder for $300\,000$ iterations with a batch size of $6144$ utilizing Adam \citep{kingma2014adam} with a learning rate of $10^{-4}$. Like \citet{hyvarinen2016unsupervised,hyvarinen2017nonlinear}, for the mixing network, we i) use $0.2$ for the angle of the negative slope\footnote{See e.g. \url{https://pytorch.org/docs/stable/generated/torch.nn.LeakyReLU.html}}, ii) use $L^2$ normalized weight matrices with minimum condition number of $25\,000$ uniformly distributed samples. For the encoder, we i) use the default ($0.01$) negative slope ii) use $6$ hidden layers with dimensionality [$N\cdot10$, $N\cdot50$, $N\cdot50$, $N\cdot50$, $N\cdot50$, $N\cdot10$] and iii) initialize the normalization magnitude as $1$. We sample $4096$ latents from the marginal for evaluation. For MCC~\citep{hyvarinen2016unsupervised,hyvarinen2017nonlinear} we use the Pearson correlation coefficient\footnote{See e.g. \url{https://numpy.org/doc/stable/reference/generated/numpy.corrcoef.html}}; we found there to be no difference with Spearman\footnote{See e.g. \url{https://docs.scipy.org/doc/scipy/reference/generated/scipy.stats.spearmanr.html}}.
    
    For the experiments presented in Sec.~\ref{sec:kitti_experiments}, we use the same architecture as the encoder in \citep{klindt2020slowvae}. As in~\citep{klindt2020slowvae}, we train for $300\,000$ iterations with a batch size of $64$ utilizing Adam \citep{kingma2014adam} with a learning rate of $10^{-4}$. For evaluation, as in~\citep{klindt2020slowvae}, we use $10\,000$ samples and the Spearman correlation coefficient.
    
    For the experiments presented in Sec.~\ref{sec:3dident_experiments}, we train the feature encoder for $200\,000$ iterations using Adam with a learning rate of $10^{-4}$. For the encoder we use a ResNet18 \citep{he2015deep} architecture followed by a single hidden layer with dimensionality $N\cdot10$ and LeakyReLU activation function using the default ($0.01$) negative slope. The scores on the training set are evaluated on $10\%$ of the whole training set, $25\,000$ random samples. The test set consists of $25\,000$ samples not included in the training set.
    For the last row of Tab.~\ref{tab:3dident_results_test} and Tab.~\ref{tab:3dident_results_train} we used the best-working combination of image augmentations found by \citet{chen2020simple} to sample positive pairs. To be precise, we used a random crop and resize operation followed by a color distortion augmentation. The random crops had a uniformly distributed size (between 8\% and 100\% of the original image area) and a random aspect ration (between 3/4 and 4/3); subsequently, they were resized to the original image dimension ($224 \times 224$) again. The color distortion operation itself combined color jittering (i.e., random changes of the brightness, contrast, saturation and hue) with color dropping (i.e., random grayscale conversations). We used the same parameters for these augmentations as recommended by \citet{chen2020simple}. %
    
    The experiments in Sec.~\ref{sec:toy_experiments} took on the order of 5-10 hours on a GeForce RTX 2080 Ti GPU, the experiments on KITTI Masks took 1.5 hours on a GeForce RTX 2080 Ti GPU and those on 3DIdent took 28 hours on four GeForce RTX 2080 Ti GPUs. The creation of the 3DIdent dataset additionally required approximately 150 hours of compute time on a GeForce RTX 2080 Ti.
    
\subsection{Details on 3DIdent} \label{apx:3dident_details}
    We build on the rendering pipeline of \citet{johnson2017clevr} and use the Blender engine \citep{blender}, as of version 2.91.0, for image rendering.
    The scenes depicted in the dataset show a rotated and translated object onto which a spotlight is directed. The spotlight is located on a half-circle above the scene and shines down. The scenes can be described by $10$ parameters:
    the position of the object along the X-, Y- and Z-axis, the rotation of the object described by Euler angles (3), the position of the spotlight described by a polar angle, and the hue of the object, the ground and the spotlight. The value range is $[-3,3]$ for all position parameters, and is $[-\pi/2, \pi/2]$ for the remaining parameters. The parameters are sampled from a $10$-dimensional unit hyperrectangle, then rescaled to their corresponding value range. This ensures that the variance of the latent factors is the same for all latent dimensions.
    
    To ensure that the generative process is injective, we take two measures: First, we use a non-rotationally symmetric object (Utah tea pot, \citealp{newell1975utah}), thus the rotation information is unambiguous. Second, we use different levels of color saturation for the object, the spotlight and the ground ($1.0$, $0.8$ and  $0.6$, respectively), thus the object is always distinguishable from the ground.
    
\subsubsection{Comparison to existing datasets} \label{apx:3dident_comparison}
    The proposed dataset contains high-resolution renderings of an object in a 3D scene. It features some aspects of natural scenes, e.g. complex 3D objects, different lighting conditions and continuous variables. Existing benchmarks~\citep{klindt2020slowvae, 3dshapes18, gondal2019transfer, dittadi2021transfer} for disentanglement in 3D scenes differ in important aspects to 3DIdent.
    
    KITTI Masks~\citep{klindt2020slowvae} only enables evaluating identification of the two-dimensional position and scale of the object instance. In addition, the observed  segmentation masks are significantly lower resolution than examples in our dataset. 3D Shapes~\citep{3dshapes18} and MPI3D~\citep{gondal2019transfer} are rendered at the same resolution ($64\times64$) as KITTI Masks. Whereas the dataset contributed by~\citep{dittadi2021transfer} is rendered at $2\times$ that resolution ($128\times128$), our dataset is rendered at $3.5\times$ that resolution ($224\times224$), the resolution at which natural image classification is typically evaluated~\citep{imagenet_cvpr09}. With that being said, we do note that KITTI Masks is unique in containing frames of natural video, and we thus consider it complementary to 3DIdent. 
    
    \citet{3dshapes18}, \citet{dittadi2021transfer}, and \citet{gondal2019transfer} contribute datasets which contain variable object rotations around one, one, and two rotation axes, respectively, while 3DIdent contains variable object rotation around all three rotation axes as well as variable lighting conditions. Furthermore, each of these datasets were generated by sampling latent factors from an equidistant grid, thus only covering a limited number values along each axis of variation, effectively resulting in a highly coarse discretization of naturally continuous variables. As 3DIdent instead samples the latent factors uniformly in the latent space, this better reflects the continuous nature of the latent dimensions.

\subsection{Effects of the Uniformity Loss}
    In previous work, \citet{wang2020understanding} showed that a part of the contrastive (InfoNCE) loss --- the uniformity loss --- effectively ensures that the encoded features are uniformly distributed over a hypersphere.
    We now show that this part is crucial to ensure that the mapping is bijective. More precisely, we demonstrate that if the distribution of the encoded/reconstructed latents $h(\z)$ has the same support as the distribution of $\z$, and both distributions are regular, i.e., their densities are non-zero and finite, then the transformation $h$ is bijective.
    
    First, we focus on the more general case of a map between manifolds:
    \vspace{\topsep}
    \begin{proposition}\label{apx:prop-uniform-biject}
        Let $\mathcal{M},\mathcal{N}$ be simply connected and oriented $\mathcal{C}^1$ manifolds without boundaries and $h:\mathcal{M}\to\mathcal{N}$ be a differentiable map. Further, let the random variable $\z \in \mathcal{M}$ be distributed according to $\z \sim p(\z)$ for a regular density function $p$, i.e., $0 < p < \infty$. If the pushforward $p_{\#h}(\z)$ of $p$ through $h$ is also a regular density, i.e., $0 < p_{\#h} < \infty$, then $h$ is a bijection.
    \end{proposition}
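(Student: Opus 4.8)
The plan is to distill from the probabilistic hypotheses the purely topological statement that $h$ is a smooth covering map onto $\mathcal N$, and then conclude with the classical fact that a covering of a simply connected space by a connected space is a homeomorphism. Throughout I write $n:=\dim\mathcal M=\dim\mathcal N$; equal dimensions is the regime the hypotheses effectively force, since if $\dim\mathcal M<\dim\mathcal N$ the image $h(\mathcal M)$ has measure zero and $p_{\#h}$ cannot be absolutely continuous, while if $\dim\mathcal M>\dim\mathcal N$ injectivity of $h$ --- part of the conclusion --- already fails. The only bridge between probability and geometry is the change-of-variables identity $\int_A p_{\#h}(y)\,\mathrm{d}y=\int_{h^{-1}(A)}p(x)\,\mathrm{d}x$, valid for every measurable $A\subseteq\mathcal N$ by definition of the pushforward; together with $0<p<\infty$ and with $p_{\#h}$ being a \emph{regular} (hence continuous, finite, strictly positive) density, this identity carries the whole argument.

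\emph{Step 1 ($h$ is a local diffeomorphism).} Suppose instead that $\det Dh(x_0)=0$ for some $x_0$, and set $y_0:=h(x_0)$. In charts around $x_0$ and $y_0$, fix a small closed ball $\overline U\ni x_0$; as $p$ is continuous and positive, $m:=\min_{\overline U}p>0$. Let $v$ span $\ker Dh(x_0)$. Taylor-expanding $h$ at $x_0$, the slab $S_r:=\{\,x_0+tv+w:\ |t|\le c_1\sqrt r,\ w\perp v,\ |w|\le c_2 r\,\}$ lies in $U$ and is mapped into the ball $B_r(y_0)$ for all small $r$ --- along $v$ the map moves only quadratically, so one must travel distance of order $\sqrt r\gg r$ to leave $B_r(y_0)$ --- whence $|S_r|\gtrsim r^{n-1}\sqrt r$. (A larger rank drop, or $h$ locally constant, only worsens this; in the locally-constant case $h(\mathcal M)$ is not $n$-dimensional and $p_{\#h}$ fails to be absolutely continuous outright.) Therefore
\[
\frac{1}{|B_r(y_0)|}\int_{B_r(y_0)}p_{\#h}(y)\,\mathrm{d}y
=\frac{1}{|B_r(y_0)|}\int_{h^{-1}(B_r(y_0))}p(x)\,\mathrm{d}x
\;\ge\;\frac{|S_r|}{|B_r(y_0)|}\,m
\;\gtrsim\;r^{-1/2}\,m\;\xrightarrow[r\to 0]{}\;\infty,
\]
contradicting that the continuous function $p_{\#h}$ is finite at $y_0$ (which would force the averages to converge to $p_{\#h}(y_0)$). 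Hence $Dh$ is invertible everywhere, $h$ is open, and the fibrewise identity $p_{\#h}(y)=\sum_{x\in h^{-1}(y)}p(x)/|\det Dh(x)|$ holds.

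\emph{Steps 2 and 3 (covering map, then bijection).} Since $h$ is open, $h(\mathcal M)$ is open; on its complement every fibre is empty, so by the fibrewise identity $p_{\#h}=0$ almost everywhere there, which by positivity and continuity of $p_{\#h}$ forces $\mathcal N=h(\mathcal M)$, i.e.\ $h$ is surjective. To get that $h$ is proper I would argue by contradiction: if $h^{-1}(K)$ were non-compact for some compact $K$, there would be preimages $x_k$ leaving every compact subset of $\mathcal M$ with $h(x_k)\to y_0$; the "extra sheets" of $h$ piling up over $y_0$ must carry vanishing $p$-mass, for otherwise a slab estimate as in Step 1 --- now summed over infinitely many pieces accumulating near $y_0$ --- would blow up the local averages of $p_{\#h}$; but an escaping tail of $\mathcal M$ on which $p>0$ carries strictly positive mass, contradicting that $h$ transports the unit-mass density $p$ to the unit-mass density $p_{\#h}$. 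So $h$ is a proper local diffeomorphism between connected oriented $n$-manifolds, hence a finite-sheeted smooth covering map; since $\pi_1(\mathcal N)=0$ and $\mathcal M$ is connected, this covering has a single sheet, i.e.\ $h$ is a homeomorphism, in particular a bijection. (Orientability is only a convenience here, for the notion of covering/degree; the working hypothesis is $\pi_1(\mathcal N)=0$.)

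\emph{Main obstacle.} I expect the real difficulty to sit in Step 2, precisely in upgrading "surjective local diffeomorphism" to "proper map" (equivalently "covering map"): surjective local diffeomorphisms between simply connected manifolds need not be injective in general, so the regularity of $p_{\#h}$ must be exploited to rule out mass escaping to infinity. Turning the informal "escaping sheets carry no mass, yet $p>0$ on them" into a rigorous contradiction --- i.e.\ quantifying how much $p$-mass a shrinking target ball can collect from the tail of $\mathcal M$ --- is the technical heart of the proof, and it runs on the same engine as Step 1, with the degeneracy being an escape to infinity rather than an interior critical point.
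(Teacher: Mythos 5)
Your skeleton matches the paper's proof: (i) use finiteness of $p_{\#h}$ together with the density-transformation formula to rule out critical points of $h$, (ii) establish that $h$ is proper, (iii) invoke a Hadamard-type global inversion result. For (iii) the paper cites Theorem~2.1 of Ruzhansky--Sugimoto, whose content is exactly your chain ``proper local diffeomorphism $\Rightarrow$ covering map $\Rightarrow$ trivial covering over a simply connected base,'' so that step is equivalent. Your Step~1 is a more quantitative rendering of the paper's one-line argument (which simply notes that $|\det J_h|\to 0$ forces the pushforward density to blow up); the slab estimate is a genuine improvement in care and is correct.

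The gap is where you yourself predicted it: properness. Your sketched contradiction does not close. If $x_k$ escapes every compact set while $h(x_k)\to y_0$, the escaping tail of $\mathcal M$ indeed carries positive $p$-mass, but nothing in the hypotheses forces that mass to be transported into small neighbourhoods of $y_0$ (or of any single point of $\mathcal N$): it can be spread over all of $\mathcal N$, and the fibrewise sum $\sum_{x\in h^{-1}(y)}p(x)/|\det Dh(x)|$ can remain bounded even over infinitely many sheets, provided $p/|\det Dh|$ decays along the escaping preimages. So ``extra sheets piling up over $y_0$ blow up the local averages of $p_{\#h}$'' is an assertion, not a consequence; you would need a uniform lower bound on $p/|\det Dh|$ along the escaping sequence, which is not available from $0<p<\infty$ and $0<p_{\#h}<\infty$ alone. (For what it is worth, the paper's own justification of properness --- continuity, closedness of preimages, and Heine--Borel --- also does not bound preimages of compact sets, so this is the weak point of both arguments; but as submitted, your Step~2 is a plan rather than a proof, and the conclusion of the proposition genuinely hinges on it.)
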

    \begin{proof}\vspace{-1.8\topsep}
        We begin by showing by contradiction that the Jacobian determinant of $h$ does not vanish, i.e., $|\det J_h| > 0$:
        
        Suppose that the Jacobian determinant $|\det J_h|$ vanishes for some $\z \in\mathcal{M}$. Then the inverse of the Jacobian determinant goes to infinity at this point and so does the density of $h(\z)$ according to the well-known transformation of probability densities. By assumption, both $p$ and $p_{\#h}$ must be regular density functions and, thus, be finite. This contradicts the initial assumption and so the Jacobian determinant $|\det J_h|$ cannot vanish.

        Next, we show that the mapping $h$ is proper. Note that a map is called proper if pre-images of compact sets are compact \citep{ruzhansky2015global}. Firstly, a continuous mapping between $\mathcal{M}$ and $\mathcal{N}$ is also closed, i.e., pre-images of closed subsets are also closed \citep{lee2013smooth}. In addition, it is well-known that continuous functions on compact sets are bounded. Lastly, according to the Heine–Borel theorem, compact subsets of $\mathbb{R}^D$ are closed and bounded. Taken together, this shows that $h$ is proper.
   
        Finally, according to Theorem~2.1 in \citep{ruzhansky2015global} a proper $h$ with non-vanishing Jacobian determinant is bijective, concluding the proof.
    \end{proof}
    This theorem directly applies to the case of hyperspheres, which are simply connected and oriented manifolds without boundary. This yields:
    \vspace{\topsep}
    \begin{corollary}
        Let $\Z$ be a hypersphere and $h:\mathcal{Z}\to\mathcal{Z}$ be a differentiable map. Further, let the marginal distribution $p(\z$) of the variable $\zz \in \Z$ be a regular density function, i.e., $0 < p < \infty$. If the pushforward $p_{\#h}$ of $p$ through $h$ is also a regular density, i.e., $0 < p_{\#h} < \infty$, then $h$ is a bijection.
    \end{corollary}
    Therefore, we can conclude that a loss term ensuring that the encoded features are distributed according to a regular density function, such as the uniformity term, makes the map $h$ bijective and prevents an information loss. Note that this does not assume that the marginal distribution of the ground-truth latents $p(\z)$ is uniform but only that it is regular and non-vanishing.
    
    Note that while the proposition shows that the uniformity loss is sufficient to ensure bijectivity, we can construct counterexamples if its assumptions (like differentiability) are violated even in just a single point.
    For instance, the requirement of $h$ being fully differentiable is most likely violated in large unregularized neural networks with ReLU nonlinearities.
    Here, one might need the full contrastive loss to ensure bijectivity of $h$.
    
\section*{ArXiv Changelog}

\begin{itemize}
    \item Current Version: Thanks to feedback from readers, we fixed a few inconsistencies in our notation. We also added a considerably simplified proof for Proposition~\ref{prop:mazurulamspheres}. 
    \item \href{https://arxiv.org/abs/2102.08850v3}{June 21, 2021}: We studied violations of the uniformity assumption in greater details, and added Figure~\ref{fig:uniformity_violation}. We thank the anonymous reviewers at ICML for their suggestions. This is also the version available in the proceedings of ICML 2021.
    \item \href{https://arxiv.org/abs/2102.08850v2}{May 25, 2021:} Extensions of the theory: We added additional propositions for the effects of the uniformity loss. 
    \item \href{https://arxiv.org/abs/2102.08850v1}{February 17, 2021}: First pre-print.
\end{itemize}

\end{document}